\newtheorem{remark}[theorem]{Remark}
\newcommand{\bx}{\boldsymbol{x}}
\newcommand{\by}{\boldsymbol{y}}
\newcommand{\bY}{\boldsymbol{Y}}
\newcommand{\bZ}{\boldsymbol{Z}}
\newcommand{\bs}{\boldsymbol{s}}
\newcommand{\bt}{\boldsymbol{t}}
\newcommand{\ba}{\boldsymbol{a}}
\newcommand{\bb}{\boldsymbol{b}}
\newcommand{\bg}{\boldsymbol{g}}
\newcommand{\bz}{\boldsymbol{z}}
\newcommand{\bq}{\boldsymbol{q}}
\newcommand{\bv}{\boldsymbol{v}}
\newcommand{\bw}{\boldsymbol{w}}
\newcommand{\balpha}{\bm{\alpha}}
\newcommand{\bbeta}{\bm{\beta}}
\newcommand{\bgamma}{\bm{\gamma}}
\newcommand{\bxi}{\bm{\xi}}
\newcommand{\btheta}{\boldsymbol{\theta}}
\newcommand{\blambda}{\boldsymbol{\lambda}}
\newcommand{\mbI}{\mathbf{I}}
\newcommand{\mbbR}{\mathbb{R}}
\newcommand{\mbW}{\mathbf{W}}
\DeclareMathOperator*{\argmax}{arg\,max}
\begin{document}
	
\title{Augmented KRnet for density estimation and approximation}
\author{Xiaoliang Wan\thanks{Department of Mathematics,
Center for Computation and Technology, Louisiana State
University, Baton Rouge 70803 (\tt xlwan@math.lsu.edu)}
\and
Kejun Tang\thanks{Peng Cheng Laboratory, Shenzhen, China ({\tt tangkj@pcl.ac.cn}).
}
}

\maketitle

\begin{abstract}
In this work, we have proposed augmented KRnets including both discrete and continuous models. One difficulty in flow-based generative modeling is to maintain the invertibility of the transport map, which is often a trade-off between effectiveness and robustness. The exact invertibility has been achieved in the real NVP using a specific pattern to exchange information between two separated groups of dimensions. KRnet has been developed to enhance the information exchange among data dimensions by incorporating the Knothe-Rosenblatt rearrangement into the structure of the transport map. Due to the maintenance of the exact invertibility, a full nonlinear update of all data dimensions needs three iterations in KRnet. To alleviate this issue, we will add augmented dimensions that act as a channel for the data dimensions to exchange information. In the augmented KRnet, a fully nonlinear update is achieved in two iterations. We also show that the augmented KRnet can be reformulated as the discretization of a neural ODE, where the exact invertibility is kept such that the adjoint method can be formulated with respect to the discretized ODE to obtain the exact gradient.  Numerical experiments have been implemented to demonstrate the effectiveness of our models. 
\end{abstract}
\begin{keywords}
Deep learning, Density estimation, Optimal transport, Uncertainty quantification
\end{keywords}

\section{Introduction}
Density estimation and approximation play an important role in many fields such as  variational Bayes, uncertainty quantification, unsupervised learning, etc. However, classical approaches or models such as kernel density estimator or the mixture of Gaussians are usually limited to low-dimensional cases due to the curse of dimensionality \cite{Scott2015}. Recently deep generative modeling has received a lot of attention in deep learning, which is closely related to density estimation. The main motivation of deep generative modeling is to deal with the distribution of data that have a very large number of dimensions, e.g., high-resolution images. So the deep generative modeling needs to balance modeling capability and efficiency. For example, generative adversarial networks (GANs) \cite{Goodfellow_2014,Arjovsky_2017} are able to learn a mapping from a latent space to the data space  without an explicit definition of likelihood. Due to such a flexibility, GANs have been successfully applied to many realistic applications; however, the lack of likelihood means that it is not suitable for density approximation, where a probability density function (PDF) is needed. Likelihood-based deep generative models have also been developed including the autoregressive models \cite{Graves_2013,Oord_2016a,Oord_2016b,Papamakarios_2018}, variational autoencoders (VAE) \cite{Kingma_2014,Kingma_2016}, and flow-based generative models \cite{Dinh_2014,Rezende_2015,Dinh_2016,Dhariwal_2018,Zhang_2018,Berg_2019}. The combination of different modeling strategies has also been actively explored. For instance, the flow-based model was coupled with GAN in \cite{Grover_2018} to obtain a likelihood;  The VAE, flow-based model and GAN were coupled in \cite{Zhu_2019} for more flexibility and efficiency; The flow-based model has been formulated as a discretized neural ordinary differential equations (ODE) \cite{Chen_2019,Dupont_2019}, where the velocity field of the ODE is modeled as a neural network. 

We pay particular attention to flow-based generative models. The underlying idea of flow-based generative models is to construct a transport map from the data distribution to a prior distribution, e.g., the standard Gaussian. There are two ways to define such a transport map: continuous and discrete models. The continuous models refer to the dynamical evolution given by a neural ODE, which transforms the distribution of the initial data to another distribution within a certain amount of time. In discrete models, 
the transport maps are explicitly constructed by stacking a sequence of simple bijections modeled by shallow neural networks. Both continuous and discrete models need to maintain the invertibility of the transport map. The transport map and its inverse determine two important things. One mapping direction yields the PDF model of the data distribution, which can be written as a product of the PDF of the prior distribution and the determinant of the Jacobian matrix; and the other mapping direction yields sample generation, which maps the samples generated by the prior distribution to samples that are consistent with the data distribution. Simply speaking, flow-based generative models provide an PDF model, which can be easily sampled. This is similar to classical probabilistic model such as the mixture of Gaussians. However, deep generative models are usually much more complex and capable. 

One interesting question is whether the flow-based generative model can serve as a generic PDF model for both density approximation and sample generation for problems in scientific computing. Note that density approximation and sample generation are usually addressed separately. To approximate a high-dimensional PDF, such as the posterior distribution in variational Bayes, the commonly used model is a Gaussian with diagonal or banded covariance matrix, which is often too simple although the statistics can be easily dealt with \cite{Blei_2018}. Given unlimited computational resources, sampling approaches such as Markov Chain Monte Carlo (MCMC) may eventually yield true samples for an arbitrary PDF under the assumption that the PDF is explicitly known up to a constant. It is challenging to compute the statistics of a high-dimensional random variable whose density is defined by a PDF equation, where both PDF approximation and sample generation may be expected simultaneously. We expect that the flow-based generative model can be capable enough to balance these two issues, e.g., we have applied the real NVP \cite{Dinh_2016} to importance sampling for efficient probability estimation for a PDE subject to uncertainty \cite{Wan_JCP20}, and KRnet \cite{Wan_KRnet} to approximate high-dimensional Fokker-Planck equations \cite{Wan_KRnet_FPE}. 

We have developed KRnet in \cite{Wan_KRnet} as a generalization of the real NVP \cite{Dinh_2016} by incorporating the triangular structure of the Knothe-Rosenblatt rearrangement into the definition of the transport map. The real NVP separates all data dimensions into two groups. When updating the current data, one group of dimensions can receive nonlinear information of the other group but only linear information of itself, which is a compromise to maintain the exact invertibility of the transport map. Using such a method, a fully nonlinear update needs three iterations. The main idea of KRnet was to enhance the exchange of information among data dimensions through a more flexible partition of data dimensions, which, however, does not change the mechanism of information exchange in each iteration. So KRnet cannot deal with one-dimensional data since two groups of data dimensions are needed. To alleviate this issue, we introduce augmented dimensions in this paper, which serve as a channel for the data dimensions to send and receive nonlinear information. The augmented KRnet achieves a fully nonlinear update in two iterations and is able to deal with one-dimensional data. We then reformulate the augmented KRnet such that it can be regarded as a discretization of an ODE, where the exact invertibility is kept in the discretization. The advantage of a discretization with exact invertibility is that the adjoint method can be formulated in terms of the discrete model instead of the ODE such that the computation of the gradient is exact. The drawback is that the accuracy of such a discretization is only of first order. 

The manuscript is organized as follows. In next section we briefly overview flow-based generative models and the KRnet. In section \ref{sec:aug_KRnet}, we define augmented KRnet including both discrete and continuous models. Numerical experiments are implemented to demonstrate the effectiveness of the proposed strategies in section \ref{sec:num}, followed by a summary section.  

\section{KRnet}
KRnet is a discrete flow-based generative model. Generally speaking, the key component of a flow-based generative model is an invertible mapping $f(\cdot):\mathbb{R}^n\rightarrow\mathbb{R}^n$:
\begin{align*}
	\bz&=f(\by)=f_{[m]}\circ f_{[m-1]}\circ\ldots\circ f_{[i]}\circ\ldots f_{[2]}\circ f_{[1]}(\by),\\
	\by&=f^{-1}(\bz)=f^{-1}_{[1]}\circ f^{-1}_{[2]}\circ\ldots\circ f^{-1}_{[i]}\circ\ldots f^{-1}_{[m-1]}\circ f^{-1}_{[m]}(\bz),
\end{align*} 
which can be regarded as a composite mapping that consists of a sequence of intermediate bijections $f_{[i]}(\cdot):\mathbb{R}^n\rightarrow\mathbb{R}^n$. Let $p_{\bZ}$ and $p_{\bY}$ be the probability density functions (PDF) of the random variables $\bZ$ and $\bY$ respectively. The transformation $\bZ=f(\bY)$ yields the following relation
\begin{equation}\label{eqn:pdf_model}
	p_{\bY}(\by)=p_{\bZ}(f(\by))\left|\det\nabla_{\by} f(\by)\right|.
\end{equation}
In other words, if we associate $\bZ$ with a certain prior distribution, e.g., the standard Gaussian distribution, the mapping $f(\cdot)$ induces explicitly a PDF model $p_{\bY}$, which can be used for either density estimation or approximation. Furthermore, the distribution $p_{\bY}$ can be easily sampled as $\by^{(i)}=f^{-1}(\bz^{(i)})$ thanks to the invertibility of $f(\cdot)$, where $\bz^{(i)}$ is sampled from the prior distribution $p_{\bZ}$. Flow-based generative models share two main features: a large number of intermediate mappings $f_{[i]}(\cdot)$ and invertibility. Since $f(\cdot)$ intends to map the prior to an arbitrary distribution, a large number of intermediate mappings implies that the complexity of $f_{[i]}(\cdot)$ can be reduced. There are two ways to obtain $f_{[i]}(\cdot)$. One is explicit construction, e.g., NICE \cite{Dinh_2014}, real NVP \cite{Dinh_2016}, planar flow \cite{Rezende_2015}, inverse autoregressive flow \cite{Kingma_2016}, Sylvester flow \cite{Berg_2019}, and KRnet \cite{Wan_KRnet}; the other one is through the discretization of a continuous model, e.g.,  neural ODE \cite{Chen_2019} and its variants subject to either augmentation \cite{Dupont_2019} or regularization \cite{Yang_2019,Finlay_2020}. Depending on the way to obtain $f_{[i]}(\cdot)$, we may classify the flow-based generative models as discrete or  continuous models. Invertibility plays an important role because density estimation and sample generation use opposite directions of the same mapping. The invertibility of discrete models is usually exact and maintained locally by each $f_{[i]}(\cdot)$ while the invertibility of continuous models may only be kept at the continuous level and is not exact locally after the continuous model is discretized. For instance, for neural ODEs the two directions of the mapping $f(\cdot)$ correspond to forward and backward integration of the ODE. It is well known that although an ODE is theoretically invertible there does not exist a numerical scheme which is exactly invertible.   

In terms of the optimal transport theory, the mapping $f(\cdot)$ corresponds to a transport map. Let $\mu_{\bY}$ and $\mu_{\bZ}$ indicate the probability measures of $\bY$ and $\bZ$, respectively. The mapping ${T}:\bZ\rightarrow\bY$ is called a transport map such that ${T}_\#\mu_{\bZ}=\mu_{\bY}$, where ${T}_{\#}\mu_{\bZ}$ is the push-forward of the law $\mu_{\bZ}$ of $\bZ$ such that $\mu_{\bY}(B)=\mu_{\bZ}({T}^{-1}(B))$ for every Borel set $B$ \cite{Filippo2010}. It is seen that $T$ can be defined as $T(\bz)=f^{-1}(\bz)$.  The Knothe-Rosenblatt (K-R) rearrangement says that a transport map may have a lower-triangular structure such that \cite{Spantini_2017}
\begin{equation}\label{eqn:KR}
\bz={T}^{-1}(\by)=f(\by)=\left[
\begin{array}{l}
f_1(y_1)\\
f_2(y_1,y_2)\\
\vdots\\
f_n(y_1,y_2,\ldots,y_n)
\end{array}
\right].
\end{equation}  
Such a mapping can be regarded as a limit of a sequence of optimal transport maps when the quadratic cost degenerates \cite{Carlier_2010}. We have defined a flow-based generative model called KRnet in \cite{Wan_KRnet,Wan_KRnet_FPE} which generalizes the real NVP \cite{Dinh_2016} by adapting the triangular structure of the K-R rearrangement into the model. For more flexibility, we consider a partition $\by=(\by_1,\ldots,\by_K)$ in KRnet, where $\by_i=(y_{i,1},\ldots,y_{i,m})$, $1\leq K\leq n$, $1\leq m\leq n$, and $\sum_{i=1}^K\textrm{dim}(\by_i)=n$. We employ a block-version of the K-R rearrangement
\begin{equation}\label{eqn:KR-block}
\bz=f(\by)=\left[
\begin{array}{l}
f_1(\by_1)\\
f_2(\by_1,\by_2)\\
\vdots\\
f_K(\by_1,\ldots,\by_K)
\end{array}
\right].
\end{equation}  
To integrate the K-R rearrangement into the data flow of $f(\cdot)$, we need to associate the mappings $f_i(\cdot)$, $i=1,\ldots,K$, with an order. We let the data flow from $f_K$ to $f_1$:
\[
\by\xrightarrow[]{f_K}\by_{t_1}\xrightarrow[]{f_{K-1}}\by_{t_2}\xrightarrow[]{f_{K-2}}\ldots\by_{t_{K-1}}\xrightarrow[]{f_1}\by_{t_K}=\bz.
\]
At step $t_i$, a certain group of dimensions will be deactivated. Thus KRnet has a lower triangular overall structure in the sense that the number of effective dimensions decreases similarly to the transition from $f_{K}(\cdot)$ to $f_1(\cdot)$ in equation \eqref{eqn:KR-block}. 

\subsection{An overview of the layers in KRnet}\label{sec:layers_krnet}
We now briefly overview some main building blocks $f_{[i]}(\cdot)$ used in KRnet. More details can be found in \cite{Wan_KRnet,Wan_KRnet_FPE}. We let $\by_{[i]}$ indicate an intermediate state of data after the mapping $f_{[i-1]}(\cdot)$, i.e., $\by_{[i]}=f_{[i]}(\by_{[i-1]})$ with $\by_{[0]}=\by$.

\noindent\emph{1. Squeezing layer} deactivates a certain number of components using a mask 
\[
{\bq} = (\underbrace{1,\ldots,1}_{k},\underbrace{0,\ldots,0}_{n-k}).
\]
The first $k$ components given by $\bq\odot\by_{[i]}$ will keep being updated and the rest $(n-k)$ components given by $(1-\bq)\odot\by_{[i]}$ will be deactivated from then on. Here  $\odot$ indicates the Hadamard product or component-wise product. 

\noindent\emph{2. Rotation layer} provides a simple and trainable strategy to determine the dimensions that will be deactivated first. The rotation layer defines a rotation of the coordinate system through an orthogonal matrix for the current active dimensions:
	\[
	{\by}_{[i+1]}=\hat{\mbW}\by_{[i]}=\left[
	\begin{array}{cc}
	\mbW&0\\
	0&\mbI
	\end{array}
	\right]\by_{[i]}=\left[
	\begin{array}{cc}
	\mathbf{L}&0\\
	0&\mbI
	\end{array}
	\right]
	\left[
	\begin{array}{cc}
	\mathbf{U}&0\\
	0&\mbI
	\end{array}
	\right]\by_{[i]},
	\]
	where $\mbW\in\mbbR^{k\times k}$ with $k$ being the number of 1's in ${\bq}$, and $\mbI\in\mbbR^{(n-k)\times(n-k)}$ is an identity matrix, and $\mbW=\mathbf{LU}$ is the LU decomposition of $\mbW$. The entries in the lower triangular part of $\mathbf{L}$ and the upper triangular part of $\mathbf{U}$ will be treated as trainable parameters of the model except for the diagonal entries of $\mathbf{L}$ which are equal to 1. Intuitively we expect the rotation may put the most important dimensions at the beginning, which need further modifications. We need to clarify one thing. Although the purpose of this layer can be understood through a rotation of the coordinate system, we simply train $\mathbf{L}$ and $\mathbf{U}$ in practice without enforcing the unity of $\hat{\mbW}$. 

\noindent\emph{3. Scale and bias layer} provides a simplification of batch normalization which is defined as  \cite{Szegedy_2015,Dhariwal_2018}
	\begin{equation}\label{eqn:actnorm}
	{\by}_{[i+1]} = \ba\odot\by_{[i]}+\bb, 
	\end{equation}
	where $\ba$ and $\bb$ are trainable, and initialized by the mean and standard deviation of data. After the initialization, $\ba$ and $\bb$ will be treated as regular trainable parameters that are independent of the data. The scale and bias layer helps to improve the conditioning of deep nets.

\noindent\emph{4. Affine coupling layer} is the most important layer for evolving the data. Consider a partition 
	$\by_{[i]}=(\by_{[i],1},\by_{[i],2})$ with $\by_{[i],1}\in \mathbb{R}^m$ and $\by_{[i],2}\in\mathbb{R}^{n-m}$. The affine coupling layer is  defined as \cite{Wan_KRnet,Dinh_2016}
	\begin{equation}\label{eqn:new_affine}
	\left\{
	\begin{array}{ll}
      &\bz_1=\by_{[i],1},\\
      &\bz_2=\by_{[i],2}\odot(1+\alpha\tanh(\bs(\by_{[i],1}))+e^{\bbeta}\odot\tanh(\bt(\by_{[i],1})),
    \end{array}\right.
	\end{equation}
	where $\bs,\bt\in\mathbb{R}^{n-m}$ stand for scaling and translation functions depending only on $\by_{[i],1}$, $0<\alpha<1$ and $\bbeta\in\mathbb{R}^n$. Note that $\by_{[i],2}$ is updated linearly while the mappings $\bs(\by_{[i],1})$ and $\bt(\by_{[i],1})$ can be arbitrarily complicated, which are modeled as a neural network (NN),
	\begin{equation}\label{eqn:NN}
	(\bs, \bt) = \textsf{NN}(\by_{[i],1}).
	\end{equation}
	Then the Jacobi matrix is lower-triangular, and an inverse can be easily computed. The two parts of $\by_{[i]}$ will be updated alternatingly by a sequence of affine coupling layers, e.g., at the next affine coupling layer, the first partition will be modified while the second partition remains fixed.

\noindent\emph{5. Nonlinear invertible layer} defines a component-wise one-dimensional nonlinear mapping to map $\mathbb{R}$ to itself. We decompose $\mathbb{R}=(-\infty,-a)\cup[-a,a]\cup(a,\infty)$ for $0<a<\infty$, and define
	\begin{equation}\label{eqn:nonlinear_layer_def}
z=\hat{F}(y)=\left\{
\begin{array}{rl}
\beta (y-a)+a,&y\in(-\infty,-a)\\
\phi^{-1}\circ F\circ\phi(y),&y\in[-a,a]\\
\beta (y+a)-a,&y\in(a,\infty),
\end{array}
\right.
\end{equation}
	where $\phi:[-a,a]\rightarrow[0,1]$ is an affine mapping, $\beta>0$ is a scaling factor, and 
	\begin{equation}
	F(x)=\int_0^xp(x)dx,\quad \forall x\in[0,1].
	\end{equation}
Here $p(x)$ can be regarded a PDF and $F(x)$ a cumulative distribution function. In particular, $p(x)$ will be defined as a piecewise linear function  such that $F(x)$ is a quadratic function whose inverse can be computed explicitly.

\subsection{Main structure of KRnet}
The main structure of KRnet is illustrated in Figure \ref{fig:structure_diagram}. 
\begin{figure}	\center{
		\includegraphics[width=0.58\textwidth]{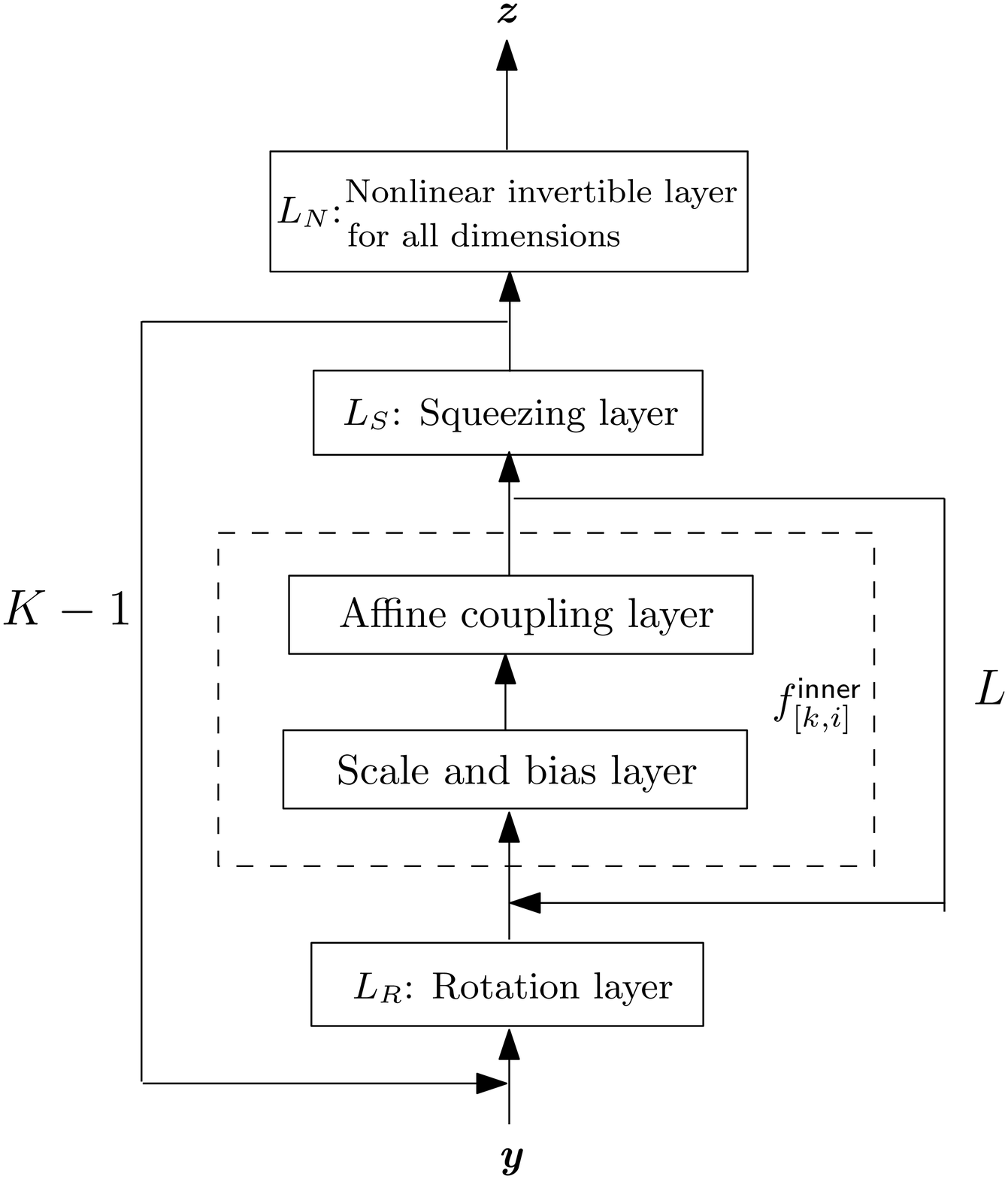}
	}
	\caption{Left: the flow chart of the block-triangular invertible mapping KRnet.}\label{fig:structure_diagram}
\end{figure}
KRnet is mainly defined by two loops: outer loop $f_{[k]}^{\mathsf{outer}}$ and inner loop $f_{[k,i]}^{\mathsf{inner}}$, $k=1,\ldots,K-1;\,i=1,\ldots,L$, where the outer loop has $K-1$ stages induced by the $K$ mappings $f_i$ in equation \eqref{eqn:KR-block}, and the inner loop has $L$ stages indicating the length of a chain that consists of general coupling layers.
\begin{itemize}
	\item Outer loop. The outer loop defines the main structure of KRnet that is consistent with the KR arrangement:
	\begin{equation}
	\bz=f(\by)=L_N\circ f^{\mathsf{outer}}_{[K-1]}\circ\ldots\circ f^{\mathsf{outer}}_{[1]}(\by).
	\end{equation}
	Let $\by_{[k]}=f^{\mathsf{outer}}_{[k]}(\by_{[k-1]})$ with $\by_{[0]}=\by$, and $i=1,\ldots,K-1$. Each $\by_{[k]}=(\by_{[k],1},\ldots,\by_{[k],K})$ has the same partition. The $i$th partition will remain unchanged after $K-i+1$ stages. For example, $\by_{[k],K}$ will be updated only when $k=1$ and $\by_{[k],K-i}$ will be deactivated when $k>i+1$. This way, the number of effective dimensions decreases as $k$ increases. Assuming that the prior distribution is chosen as a standard Gaussian, all dimensions of $\by_{[K-1]}$ are supposed to be independent of each other after the outer loop is completed. We then activate all the dimensions and apply the nonlinear invertible layer to $\by_{[K-1]}$ component-wisely before the final output. The nonlinear invertible layer generalizes the prior distribution through a component-wise nonlinear transformation.  
	\item Inner loop. The inner loop mainly consists of a sequence of general coupling layers $f^{\mathsf{inner}}_{[k,i]}$.  Each $f^{\mathsf{inner}}_{[k,i]}$ includes one scale and bias layer, and one affine coupling layer. $f^{\mathsf{outer}}_{[k]}$ can be represented as:
	\begin{equation}
	f^{\mathsf{outer}}_{[k]}=L_S\circ f^{\mathsf{inner}}_{[k,L]}\circ\ldots\circ f^{\mathsf{inner}}_{[k,1]}\circ L_R,
	\end{equation}
	where $L_R$ is a rotation layer, and $L_S$ is a squeezing layer. The affine coupling layers in $f_{[k,i]}^{\mathsf{inner}}$ are defined between $\by_{[k],K+1-k}$ and the other active parts $\by_{[k],i}$, $i=1,\ldots,K-k$. Since we need at least two affine coupling layers for a full update of all data dimensions, we usually assume that $L$ is even.
\end{itemize}

\section{Augmented KRnet}\label{sec:aug_KRnet}
In affine coupling layers \eqref{eqn:new_affine} we need to update a certain part of the data using a mapping of the other part. The main motivation of such a strategy is to maintain the exact invertibility. The main drawback of such a strategy is that the change of a certain component $y_i$ for each update is at most a linear function of $y_i$ (see equation \eqref{eqn:new_affine}). To alleviate such a limitation, we implement the affine coupling layer in a higher dimensional space such that the update of $y_i$ may be in terms of all the components of $\by$ through the augmented dimensions. 

\subsection{Introduce augmented dimensions}
Suppose that $\{\by^{(i)}\}_{i=1}^N$ consists of samples from $\bY\in\mathbb{R}^n$ subject to a PDF $p_{\bY}(\by)$. We augment $\bY$ by another vector $\bgamma\in\mathbb{R}^m$ such that $\bY_{\bgamma}=(\bgamma,\bY)$. Let $\bZ\in\mathbb{R}^n$ have the prior distribution $p_{\bZ}(\bz)$. The random variable $\bZ$ is augmented similarly by a vector $\bxi$ such that $\bZ_{\bxi}=(\bxi,\bZ)$. Instead of considering an invertible mapping between $\bY$ and $\bZ$, we construct an invertible mapping between $\bY_{\bgamma}\in\mathbb{R}^{n+m}$ and $\bZ_{\bxi}\in\mathbb{R}^{n+m}$ such that 
\begin{equation}
\bz_{\bxi}=f_{\mathsf{aug}}(\by_{\bgamma}).
\end{equation}
Assuming that $\bY$ and $\bgamma$ are independent. We have the PDF of $\bY_{\bgamma}$ as
\begin{equation}\label{eqn:pdf_aug}
p_{\bY}(\by)p_{\bgamma}(\bgamma)=p_{\bZ_{\bxi}}(f_{\mathsf{aug}}(\by_{\bgamma}))|\nabla_{\by_{\bgamma}}f_{\mathsf{aug}}(\by_{\bgamma})|,\quad\forall\by_{\bgamma}.
\end{equation}
We now look at how the information flows through the augmented dimensions. Applying the affine coupling layers to the partition given by the data dimensions and the augmented dimensions, we have two adjacent affine coupling layers as 
	\begin{align}
      \bgamma_{[i+1]}=&\bgamma_{[i]}\odot\bw_{[i]}(\by_{[i]})+\bb_{[i]}(\by_{[i]}),\\
      \by_{[i+1]}=&\by_{[i]}
	\end{align}
and
	\begin{align}
	      \bgamma_{[i+2]}=&\bgamma_{[i+1]},\\
	  \by_{[i+2]}=&\by_{[i+1]}\odot\bw_{[i+1]}(\bgamma_{[i+1]})+\bb_{[i+1]}(\bgamma_{[i+1]}),
	\end{align}
where we let
\begin{align*}
\bw_{[i]}(\by_{[i]})=&1+\alpha\tanh(\bs_{[i]}(\by_{[i]}),\\
\bb_{[i]}(\by_{[i]})=&e^{\bbeta_{[i]}}\odot\tanh(\bt_{[i]}(\by_{[i]})).
\end{align*}
We observe the following flow of information:
\[
\by_{[i]}\,\,\rightarrow\,\,\bgamma_{[i+1]}\,\,\rightarrow\,\,\by_{[i+2]}\,\,\rightarrow\,\,\bgamma_{[i+3]}\,\,\rightarrow\ldots
\]	
which implies that $\by_{[i+2]}$ may be affected by all the components of $\by_{[i]}$ although such a dependence is not explicit. 

Similarly, the two adjacent steps in a regular KRnet can be written as
	\begin{align*}
      \by_{[i+1],1}=&\by_{[i],1}\odot\bw_{[i]}(\by_{[i],2})+\bb_{[i]}(\by_{[i],2})),\\
      \by_{[i+1],2}=&\by_{[i],2}
	\end{align*}
and
	\begin{align*}
	      \by_{[i+2],1}=&\by_{[i+1],1},\\
	  \by_{[i+2],2}=&\by_{[i+1],2}\odot\bw_{[i+1]}(\by_{[i+1],1}))+\bb_{[i+1]}(\by_{[i+1],1})),
	\end{align*}
where $\by_{[i]}=(\by_{[i],1},\by_{[i],2})^{\mathsf{T}}$ has been partitioned to two parts. It is seen that after two steps, $\by_{[i+2]}$ will not depend on $\by_{[i]}$ in a fully nonlinear way, where $\by_{[i+2],1}$ depends on $\by_{[i],1}$ linearly and only $\by_{[i+2],2}$ depends on both $\by_{[i],1}$ and $\by_{[i],2}$ nonlinearly. 

Although more nonlinear dependence of $\by_{[i+2]}$ on $\by_{[i]}$ has been introduced through the augmented dimensions, we need to deal with $(m+n)$-dimensional mapping, implying a higher requirement on the complexity of the model due to the curse of dimensionality. Both issues are related to the choice of $m$. We note that the dependence of $\by_{[i+2]}$ on $\by_{[i]}$ prefers a larger $m$ while a smaller $m$ is preferred from the viewpoint of model complexity. The KRnet provides an effective way to balance these two issues. 

Suppose that KRnet uses a uniform partition of $\by\in\mathbb{R}^n$ as $\by=(\by_1^\mathsf{T},\ldots,\by_K^\mathsf{T})^\mathsf{T}$ with $K=n/m$. We then consider an augmented vector $\by_{\bgamma}=(\bgamma^{\mathsf{T}},\by^\mathsf{T})^\mathsf{T}$ with $\bgamma\in\mathbb{R}^m$, i.e., we let $\by_i$ and $\bgamma$ have the same number of dimensions. 
The overall structure of the augmented KRnet is illustrated in figure \ref{fig:augmented_structure_KR}, which is similar to the regular KRnet. The main difference is that the augmented part will never be deactivated since it is used as a buffer zone for communicating information. Due to the refined partition of data in KRnet, only a small  number of augmented dimensions is needed. For example, if we deactivate dimensions one by one in KRnet, we only need one extra dimension, i.e., $m=1$. 
\begin{figure}	\center{
		\includegraphics[width=0.9\textwidth]{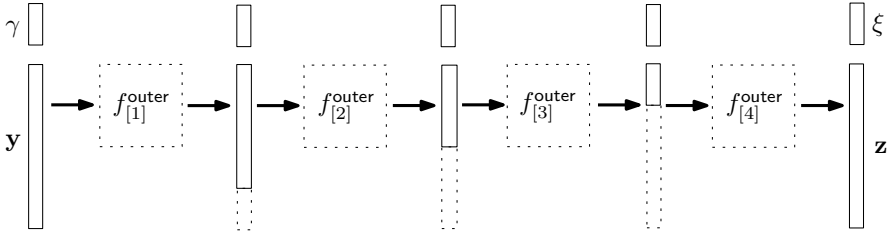}
	}
	\caption{The structure of the augmented KRnet, where the data will evolve from $\by$ to $\bz$ and the augmented vector will evolve from $\bgamma$ to $\bxi$. In this example, the data vector has been partitioned into four parts. After each iteration $f_{[i]}^{\mathsf{outer}}$, $i=1,2,3$, an extra part of the data vector will be deactivated as illustrated by the dotted line. }\label{fig:augmented_structure_KR}
\end{figure}

\subsection{Loss and the marginal PDF}
Now let us look at the loss. Let $p_{\mathsf{data}}$ correspond to the PDF for the data. By the definition of the augmented KRnet, we need to minimize the KL divergence
\begin{align*}
&D_{\mathsf{KL}}(p_{\mathsf{data}}(\by)p_{\bgamma}(\bgamma)\|p_{\bY_{\bgamma},\btheta}(\by,\bgamma))=\int  p_{\mathsf{data}}p_{\bgamma}\ln\frac{p_{\mathsf{data}}p_{\bgamma}}{p_{\bY_{\bgamma},\btheta}}d\by d\bgamma\\
=&\int p_{\mathsf{data}}\ln p_{\mathsf{data}}d\by+\int p_{\mathsf{data}}p_{\bgamma}\ln \frac{p_{\bgamma}}{p_{\bY_{\bgamma},\btheta}}d\by d\bgamma,
\end{align*}
where we use $\btheta$ to indicate the model parameters. 
Since the first term on the right-hand side is only related to the data, it is equivalent to minimize the second term, which defines the loss
\begin{equation}
{L}(p_{\bY_{\bgamma},\btheta})=\frac{1}{N}\sum_{i=1}^N\ln\frac{p_{\bgamma}(\bgamma^{(i)})}{p_{\bY_{\gamma},\btheta}(\bgamma^{(i))},\by^{(i)})}\approx\mathbb{E}_{p_{\mathsf{data}}p_{\bgamma}}\left[\ln\frac{p_{\bgamma}}{p_{\bY_{\bgamma},\btheta}}\right],
\end{equation}
where for each $\by^{(i)}$ we associate one $\bgamma^{(i)}$ sampled independently from $p_{\bgamma}$.

Finally, we look at the approximation of the marginal distribution $p_{\bY}(\by)$. By the construction of the augmented KRnet, we have
\begin{equation}\label{eqn:aug_pdf_gamma}
p_{\bY}(\by)p_{\bgamma}(\bgamma)\approx p_{\bY_{\bgamma},\btheta}(\by_{\bgamma}),\quad\forall \by_{\bgamma}.
\end{equation}
To get rid of $\bgamma$, we have at least two choices:
\begin{enumerate}
\item Integrating out $\bgamma$, we have
\begin{equation}
p_{\bY}(\by)\approx \mathbb{E}_{p_{\bgamma}}\left[\frac{p_{\bY_{\bgamma},\btheta}}{p_{\bgamma}}\right]\approx\frac{1}{N}\sum_{i=1}^N\frac{p_{\bY_{\bgamma},\btheta}(\by,\bgamma^{(i)})}{p_{\bgamma}(\bgamma^{(i)})},\quad\forall\by,
\end{equation}
where $\{\bgamma^{(i)}\}_{i=1}^N$ are sampled from $p_{\bgamma}$. If equation \eqref{eqn:aug_pdf_gamma} is well approximated, the variance of the integrand should be very small, meaning that a small sample size $N$ is sufficient.
\item Picking a certain $\gamma^*$, such that
\begin{equation}
p_{\bY}(\by)\approx p_{\bY_{\bgamma},\btheta}(\by,\bgamma^*)p^{-1}_{\bgamma}(\bgamma^*),\quad\forall \by.
\end{equation}
In particular, we may choose 
\begin{equation}
	\bgamma^*=\argmax p_{\bgamma}(\bgamma),
\end{equation}
such that
\begin{equation}
	p_{\bY}(\by)=p_{\bgamma}^{{-1}}(\bgamma^*)p_{\bZ_{\bxi}}(f_{\mathsf{aug},\btheta}(\by_{\bgamma=\bgamma^*}))|\nabla_{\by_{\bgamma=\bgamma^*}}f_{\mathsf{aug},\btheta}(\by_{\bgamma=\bgamma^*})|.
\end{equation}
If we let $p_{\bgamma}$ and $p_{\bZ_{\bxi}}$ be the standard Gaussian, i.e., 
\begin{equation}
	p_{\bgamma}=\mathcal{N}(0,\mathbf{I}),\quad p_{\bZ_{\bxi}}=\mathcal{N}(0,\mathbf{I}),
\end{equation}
we have $\bgamma^*=\mathbf{0}$ and 
\begin{equation}
	p_{\bY}(\by)\approx(2\pi)^{m/2}p_{\bZ_{\bxi}}(f_{\mathsf{aug},\btheta}(\by_{\bgamma=\bm{0}}))|\nabla_{\by_{\bgamma=\bm{0}}}f_{\mathsf{aug},\btheta}(\by_{\bgamma=\bm{0}})|.
\end{equation}
\end{enumerate}

\subsection{The complexity of the augmented KRnet}
We count the number of trainable parameters in KRnet. Let us first exclude the rotation layers and the nonlinear layers, and assume that each $f_{[k]}^{\textsf{outer}}$ has $L$ general coupling layers $f_{[k,i]}^{\textsf{inner}}$. Let $n_k$ be the number of effective dimensions for $f_{[k]}^{\textsf{outer}}$ and $\mathsf{NN}_{k,i}$ the neural network (see equation \eqref{eqn:NN}) used in $f_{[k,i]}^{\textsf{inner}}$. Assume all $\mathsf{NN}_{k,i}$ are a plain neural network with two fully connected hidden layers of width $m_k$. Let $\mathsf{NN}_{k,i}$ define a mapping from $\mathbb{R}^{n_{k,1}}$ to $\mathbb{R}^{2n_{k,2}}$ with $n_k=n_{k,1}+n_{k,2}$. The number of model parameters in $\mathsf{NN}_{k,i}$ is:
\[
m_{k}^2+2m_{k}+m_kn_k+(m_k+3)n_{k,2}.
\] 
By definition, $\mathsf{NN}_{k,i+1}$ defines a mapping from $\mathbb{R}^{n_{k,2}}$ to $\mathbb{R}^{2n_{k,1}}$ with the number of model parameters as
\[
m_{k}^2+2m_{k}+m_kn_k+(m_k+3)n_{k,1}.
\]
If we combine the two adjacent affine coupling layers, we obtain 
\[
2m_k^2+4m_k+3(m_k+1)n_k,
\]
which only depends on $m_k$ and $n_k$, and is independent of the partition introduced by the affine coupling layer. If $L$ is even, we can simply regard that $\mathsf{NN}_{k,i}$ have the same number of model parameters as
\[
N_{\mathsf{NN}_k}=m_k^2+2m_k+3(m_k+1)n_k/2.
\]
We note that the main characteristic of KRnet is that a portion of dimensions will be deactivated as $k$ increases. As $n_k$ decreases with $k$, we expect that the neural network $\mathsf{NN}_{k,i}$ may become simpler for a larger $k$. In other words, $N_{\textsf{NN},k}$ decreases as $k$ increases. A simple choice to achieve this is to decease the width of $\mathsf{NN}_{k,i}$ in terms of $k$. We let $m_{k+1}=\lceil rm_k\rceil $ with $0<r<1$.  The number of trainable parameters is  $2n_k$ for the scale and bias layer. Assume that $n=mK$. We have $n_k=(n+m)-(k-1)m$, $k=1,\ldots,K$. According to figures \ref{fig:augmented_structure_KR} and \ref{fig:structure_diagram}, we have the total number of model parameters of the augmented KRnet  as
\begin{equation}
N_{\mathsf{dof}}=\sum_{k=1}^{K}(N_{\textsf{NN}_k}L+2(K-k+2)mL)=N_{\mathsf{aKR}}L,
\end{equation}
with
\begin{equation}
N_{\mathsf{aKR}}=\sum_{k=1}^{K}(N_{\textsf{NN}_k}+2(K-k+2)m).
\end{equation}
The model complexity is mainly determined by the depth $L$ and the number $K$ for the partition of data. 

We now look at the rotation and nonlinear invertible layers. The total number of parameters from rotation layers is
\begin{equation}
\sum_{i=2}^{K}n_{k}^2=\sum_{i=2}^{K}(im)^2=\frac{mn(K+1)(2K+1)-6m^2}{6},
\end{equation}
where we assume that $n=mK$. The summation is based on two constraints: (1) only the data dimensions are rotated, and (2) at stage $K$, no rotation is needed for deactivation since it is right before the final output. The total number of parameter from nonlinear invertible layers is $nn_p$, where $n_p$ is the number of grid points for the partition of the interval $[-a,a]$, see equation \eqref{eqn:nonlinear_layer_def}. Since both  rotation layers and nonlinear invertible layers do not depend on the inner loop $f_{[k,i]}^{\mathsf{inner}}$, the portion of the DOFs from these two types of layers is in general small.

\subsection{An augmented neural ODE}
Recently the connection between Resnet and the discretization of ODE has been observed and exploited to construct deep nets subject to a certain type of recursive structure \cite{He_2015,Lu_2020,Chen_2019}. Using an ODE to describe the evolution of $\bx$ in terms of $t$, i.e.,
\begin{equation}\label{eqn:ODE}
\frac{d\bx}{dt}=\bv(\bx;\btheta),\quad \forall \bx\in[0,T],
\end{equation} 
neural ODE models the velocity field with a neural network and treats the learning process as a parameter estimation problem for the ODE model. In terms of our problem, we associate $\bx(0)$ with the data distribution, and expect the distribution at $\bx(T)$ is consistent with the prior distribution. The transformation from $\bx(0)$ to $\bx(T)$, or from $\bx(T)$ to $\bx(0)$, will be achieved by a forward or backward numerical discretization of the ODE \eqref{eqn:ODE} respectively. We note that no numerical schemes can maintain exactly the invertibility between the forward and backward integration. This implies that if we want to maintain the exact invertibility for a continuous model, we should not assume that the velocity field $\bv(\bx)$ is simply modeled by a general neural network. We intend to incorporate the structure of the augmented KRnet into the definition of the velocity field of an ODE and maintain the exact invertibility in the discretization as well.

\subsubsection{Neural ODE from an exactly invertible mapping}
We first reformulate two consecutive affine coupling layers \eqref{eqn:new_affine} as
\begin{equation}\label{eqn:affine_for_ode_one}
\left\{
\begin{array}{ll}
&\bz_1=\by_{[i],1}+\left[\by_{[i],1}\odot\bw_{1}(\by_{[i],2})+\bb_{1}(\by_{[i],2})\right]\Delta t,\\
&\bz_2=\by_{[i],2},
\end{array}
\right.
\end{equation}
and
\begin{equation}\label{eqn:affine_for_ode_two}
\left\{
\begin{array}{ll}
&\by_{[i+1],1}=\bz_{1},\\
&\by_{[i+1],2}=\bz_2+\left[\bz_{2}\odot\bw_{2}(\bz_1)+\bb_{2}(\bz_1)\right]\Delta t,
\end{array}
\right.
\end{equation}
where $\bw_i(\cdot)$ and $\bb_i(\cdot)$ take the following form
\begin{align}
\bw_i(\bx)&=e^{\balpha}\odot\tanh(\bs_i(\bx)),\\
\bb_i(\bx)&=e^{\bbeta}\odot\tanh(\bt_i(\bx)),
\end{align}
with $(\bs_i,\bt_i)= \textsf{NN}_i(\bx)$ is an neural network with input $\bx$. Compared to equations \eqref{eqn:new_affine}, we replace the constant $\alpha$ with a trainable scaling factor $e^{\balpha}$ and a constant $\Delta t$. 

Combining the two consecutive affine layers as one layer such that the whole vector gets updated, we have
\begin{equation}\label{eqn:af_2_delta_t}
\left\{
\begin{array}{rcl}
\by_{[i+1],1}&=&\by_{[i],1}+\bg_1(\by_{[i],1},\by_{[i],2})\Delta t,\\
\by_{[i+1],2}&=&\by_{[i],2}+\bg_2(\by_{[i+1],1},\by_{[i],2})\Delta t,
\end{array}
\right.
\end{equation}
where
\begin{align*}
{\bg}_1(\by_{[i],1},\by_{[i],2})&=\by_{[i],1}\odot\bw_{1}(\by_{[i],2})+\bb_{1}(\by_{[i],2})\\
{\bg}_2(\by_{[i+1],1},\by_{[i],2})&=\by_{[i],2}\odot\bw_{2}(\by_{[i+1],1})+\bb_{2}(\by_{[i+1],1})
\end{align*}
Note that
\begin{align*}
\lim_{\Delta t\rightarrow0}\frac{\by_{[i+1],1}-\by_{[i],1}}{\Delta t}&=\bg_1(\by_{[i],1},\by_{[i],2}),\\
\lim_{\Delta t\rightarrow0}\frac{\by_{[i+1],2}-\by_{[i],2}}{\Delta t}&=\lim_{\Delta t\rightarrow0}\bg_2(\by_{[i+1],1},\by_{[i],2})=\bg_2(\by_{[i],1},\by_{[i],2}).
\end{align*}
So equation \eqref{eqn:af_2_delta_t} can be regarded as an explicit one-step numerical method of a dynamical system
\begin{equation}\label{eqn:f2_ds}
\left\{
\begin{array}{rcl}
\frac{d\by_{1}}{dt}&=&\bg_1(\by_{1},\by_{2})=\by_{1}\odot\bw_{1}(\by_{2})+\bb_{1}(\by_{2}),\\
\frac{d\by_{2}}{dt}&=&\bg_2(\by_{1},\by_{2})=\by_{2}\odot\bw_{2}(\by_{1})+\bb_{2}(\by_{1}),
\end{array}
\right.
\end{equation}
where the only difference than a regular explicit Euler scheme is an updated $\by_1$ is used in the discretization of the second equation. Let $f_{\mathsf{af},1}(\cdot)$ and $f_{\mathsf{af},2}(\cdot)$ indicate the affine coupling layers given by equations \eqref{eqn:affine_for_ode_one} and \eqref{eqn:affine_for_ode_two} respectively. We define 
\[
f^i_{\mathsf{af},1,2}(\cdot)= \underbrace{(f_{\mathsf{af},1}\circ f_{\mathsf{af},2})\circ \ldots\circ (f_{\mathsf{af},1}\circ f_{\mathsf{af},2})}_\text{$i$}(\cdot)
\] 
We see that by equations \eqref{eqn:affine_for_ode_one} and \eqref{eqn:affine_for_ode_two} the mapping
\begin{equation}
\by_{[i+1]}=\by_{[i]}+(f_{\mathsf{af},1,2}-\mathrm{Id})(\by_{[i]})
\end{equation}
is invertible, where $\by_{[i]}=(\by_{[i],1},\by_{[i],2})$ and $\mathrm{Id}$ is an identity operator. In particular, the limit 
\[
\lim_{\Delta t\rightarrow0}\frac{\by_{[i+1]}-\by_{[i]}}{\Delta t}
\]
exists, which defines the dynamical system \eqref{eqn:f2_ds}.  This result can be generalized as the following lemma.
\begin{lemma}\label{lem:neural_ode_afs}
Let $\by_{[i+1]}=f_{\mathsf{af},1,2}^{m}(\by_{[i]})$ with $m\in\mathbb{N}_+$. There exists $\bg_{[m]}(\by_{[i]})\in\mathbb{R}^n$ such that
\[
\lim_{\Delta t\rightarrow0}\frac{\by_{[i+1]}-\by_{[i]}}{\Delta t}=\bg_{[m]}(\by_{[i]}),
\]
i.e., $\by_{[i]}$ can be regarded as an approximation of the ODE
\[
\dot{\by}=\bg_{[m]}(\by),
\]
if $\bg_{[m]}(\by)$ is sufficiently smooth.
\end{lemma}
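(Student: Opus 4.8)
The plan is to exploit the fact that, after the reformulation in \eqref{eqn:affine_for_ode_one}--\eqref{eqn:affine_for_ode_two}, each of the two elementary maps is \emph{exactly} an identity-plus-$\Delta t$ perturbation by a field that does not depend on $\Delta t$: writing $f_{\mathsf{af},1}=\mathrm{Id}+\Delta t\,\boldsymbol{h}_1$ with $\boldsymbol{h}_1(\by)=\big(\by_1\odot\bw_1(\by_2)+\bb_1(\by_2),\ \boldsymbol{0}\big)$ and $f_{\mathsf{af},2}=\mathrm{Id}+\Delta t\,\boldsymbol{h}_2$ with $\boldsymbol{h}_2(\by)=\big(\boldsymbol{0},\ \by_2\odot\bw_2(\by_1)+\bb_2(\by_1)\big)$. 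Since $\bw_i,\bb_i$ are built from the smooth nonlinearities $\tanh$ and $e^{(\cdot)}$ composed with the neural networks $\bs_i,\bt_i$, each $\boldsymbol{h}_j$ is $C^\infty$ and, together with its first derivatives, bounded on any compact set $\mathcal{K}\subset\mathbb{R}^n$. The $m$-fold map $f_{\mathsf{af},1,2}^{m}$ is then a composition of $2m$ elementary maps, each of the form $\mathrm{Id}+\Delta t\,\boldsymbol{h}_{(\ell)}$, where $\boldsymbol{h}_{(\ell)}$ ranges over the fields $\boldsymbol{h}_1,\boldsymbol{h}_2$ of the successive pairs (allowing distinct parameters per pair).

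First I would prove a composition estimate: if $\Phi_\ell=\mathrm{Id}+\Delta t\,\boldsymbol{h}_{(\ell)}$ with $\boldsymbol{h}_{(\ell)}\in C^1$ on a neighborhood of $\mathcal{K}$, then for all sufficiently small $\Delta t$,
\[
(\Phi_1\circ\cdots\circ\Phi_p)(\by)=\by+\Delta t\sum_{\ell=1}^{p}\boldsymbol{h}_{(\ell)}(\by)+\boldsymbol{R}_p(\by;\Delta t),
\]
with a remainder satisfying $\sup_{\by\in\mathcal{K}}\|\boldsymbol{R}_p(\by;\Delta t)\|\le C\,\Delta t^2$. This goes by induction on $p$: the case $p=1$ is trivial, and in the inductive step one inserts the expansion of $\Phi_2\circ\cdots\circ\Phi_p$ into $\Phi_1$, then expands $\boldsymbol{h}_{(1)}$ about $\by$ using its Lipschitz bound on $\mathcal{K}$; all cross terms are $O(\Delta t^2)$ uniformly. (The same bookkeeping, together with the triangular/contraction argument already used for a single pair in the excerpt, shows each $\Phi_\ell$ and hence $f_{\mathsf{af},1,2}^{m}$ is injective for small $\Delta t$, so invertibility is preserved, though the lemma does not require it.)

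Applying the estimate with $p=2m$ gives $\by_{[i+1]}=f_{\mathsf{af},1,2}^{m}(\by_{[i]})=\by_{[i]}+\Delta t\,\bg_{[m]}(\by_{[i]})+O(\Delta t^2)$, where
\[
\bg_{[m]}(\by)=\sum_{\ell=1}^{2m}\boldsymbol{h}_{(\ell)}(\by)=\sum_{k=1}^{m}\big(\boldsymbol{h}_1^{(k)}(\by)+\boldsymbol{h}_2^{(k)}(\by)\big)
\]
is a fixed ($\Delta t$-independent), smooth field — it collapses to $m(\boldsymbol{h}_1+\boldsymbol{h}_2)$ when all $m$ pairs share parameters, which for $m=1$ recovers exactly the system \eqref{eqn:f2_ds} in the limit. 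Dividing by $\Delta t$ and letting $\Delta t\to0$ yields $\lim_{\Delta t\to0}(\by_{[i+1]}-\by_{[i]})/\Delta t=\bg_{[m]}(\by_{[i]})$, the claimed limit. Moreover, since $\bg_{[m]}$ is locally Lipschitz, the ODE $\dot{\by}=\bg_{[m]}(\by)$ has a well-posed local flow, and the one-step map $\by_{[i+1]}=\by_{[i]}+\Delta t\,\bg_{[m]}(\by_{[i]})+O(\Delta t^2)$ is a consistent explicit scheme of local order two (hence first-order accurate globally) for that ODE, which is precisely the assertion of the lemma.

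I do not expect a conceptual obstacle; the only point requiring genuine care is the \emph{uniformity} of the $O(\Delta t^2)$ remainder through the $2m$ nested compositions, which forces the use of a common compact neighborhood of the trajectory on which all the $\boldsymbol{h}_{(\ell)}$ and their derivatives are bounded — this is exactly what the $\tanh$/exponential clipping of $\bw_i,\bb_i$ provides. A secondary subtlety worth a remark is that $f_{\mathsf{af},2}$ updates its second block using the \emph{already updated} first block; this is harmless, because as a map on $\mathbb{R}^n$ it is still exactly of the form $\mathrm{Id}+\Delta t\,\boldsymbol{h}_2$, so it enters the composition estimate with no modification, and the shift from the updated to the original argument only contributes at order $\Delta t^2$.
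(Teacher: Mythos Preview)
Your proposal is correct and follows essentially the same idea as the paper: decompose $f_{\mathsf{af},1,2}^{m}$ into a composition of maps each of the form $\mathrm{Id}+\Delta t\cdot(\text{field independent of }\Delta t)$, then telescope and use continuity to collapse all intermediate arguments to $\by_{[i]}$ in the limit. The paper's proof is a bit coarser in granularity: it inducts on $m$ at the \emph{pair} level, writing $f_{\mathsf{af},1,2}^{k+1}=f_{\mathsf{af},1,2}\circ f_{\mathsf{af},1,2}^{k}$, splitting $(\by_{[i+1]}-\by_{[i]})/\Delta t$ through the intermediate point $\bz=f_{\mathsf{af},1,2}^{k}(\by_{[i]})$, and using $\lim_{\Delta t\to0}\bz=\by_{[i]}$ together with continuity of $\bg_{[1]}$ to obtain the recursion $\bg_{[k+1]}=\bg_{[1]}+\bg_{[k]}$. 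You instead descend one level further to the $2m$ individual half-steps and carry a uniform $O(\Delta t^2)$ remainder on compacta through the induction. This buys you a quantitative consistency order (first-order accuracy) and explicit uniformity, which the paper's argument does not extract; the paper's version is shorter and only claims existence of the limit. Both arrive at the same formula $\bg_{[m]}=\sum_j\bg_{[1]}(\cdot;\btheta_j)$.
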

\begin{proof}
We argue by induction. It is seen that it is true when $m=1$. Assume the conclusion holds for $m\leq k$. We have
\[
\by_{[i+1]}=f_{\mathsf{af},1,2}^{k+1}(\by_{[i]})=f_{\mathsf{af},1,2}\circ f_{\mathsf{af},1,2}^{k}(\by_{[i]}).
\] 
Let $\bz=f_{\mathsf{af},1,2}^{k}(\by_{[i]})$. We have
\[
\lim_{\Delta t\rightarrow0}\frac{\by_{[i+1]}-\by_{[i]}}{\Delta t}=\lim_{\Delta t\rightarrow0}\frac{\by_{[i+1]}-\bz+\bz-\by_{[i]}}{\Delta t}.
\]
According to the assumption, there exist $\bg_{[1]}(\cdot)$ and $\bg_{[k]}(\cdot)$ such that
\begin{align*}
\lim_{\Delta t\rightarrow0}\frac{\by_{[i+1]}-\bz}{\Delta t}&=\bg_{[1]}(\bz),\\
\lim_{\Delta t\rightarrow0}\frac{\bz-\by_{[i]}}{\Delta t}&=\bg_{[k]}(\by_{[i]}).
\end{align*}
We can then let
\begin{equation}\label{eqn:force_term_recursive}
\bg_{[k+1]}(\by_{[i]})=\bg_{[1]}(\by_{[i]})+\bg_{[k]}(\by_{[i]}),
\end{equation}
since
\[
\lim_{\Delta t\rightarrow0}\bz=\lim_{\Delta t\rightarrow0}f_{\mathsf{af},1,2}^{k}(\by_{[i]})=\by_{[i]}
\]
by definition. 
\end{proof}
\begin{remark}
It is seen from equation \eqref{eqn:force_term_recursive} that every time $f_{\mathsf{af},1,2}(\cdot)$ is introduced, a vector function $\bg_{[1]}(\by_{[i]};\btheta_j)$ is added to the velocity field such that
\begin{equation}
\bg_{[k]}(\by_{[i]})=\sum_{j=1}^k\bg_{[1]}(\by_{[i]};\btheta_j),
\end{equation} 
where we include the model parameters $\btheta_j$ to differentiate the $k$ functions $g_{[1]}(\by_{[i]};\btheta_j)$, $j=1,\ldots,k$.
\end{remark}
\begin{remark}\label{rmk:multi_stage_mapping}
The mapping $\by_{[i+1]}=f_{\mathsf{af},1,2}^{m}(\by_{[i]})$ can be regarded as a multi-stage process that is defined on a time interval $[0,\Delta t]$. Let 
\begin{equation}
\by_{[i+\frac{j}{m}]}=f_{\mathsf{af},1,2}\left(\by_{[i+\frac{j-1}{m}]},\btheta_{j},\Delta t\right),\quad j=1,\ldots,m,
\end{equation}
where the transform from $\by_{[i+\frac{j-1}{m}]}$ to $\by_{[i+\frac{j}{m}]}$ is achieved at stage $j$. We can then decompose $\by_{[i+1]}=f_{\mathsf{af}}^{2m}(\by_{[i]})$ as
\begin{equation}
\left\{
\begin{array}{lcl}
\by_{[i+\frac{1}{m}]}&=&f_{\mathsf{af},1,2}\left(\by_{[i]},\btheta_1,\Delta t\right)\\
\by_{[i+\frac{2}{m}]}&=&f_{\mathsf{af},1,2}\left(\by_{[i+\frac{1}{m}]},\btheta_2,\Delta t\right)\\
&\ldots&\\
\by_{[i+1]}&=&f_{\mathsf{af},1,2}\left(\by_{[i+\frac{m-1}{m}]},\btheta_m,\Delta t\right)
\end{array}
\right.
\end{equation}
Note that the following two limits exist
\[
\lim_{\Delta t\rightarrow0}\frac{\by_{[i+\frac{j}{m}]}-\by_{[i+\frac{j-1}{m}]}}{\Delta t}={\bg}_{[1]}(\by_{[i+\frac{j-1}{m}]},\btheta_j),\quad \lim_{\Delta t\rightarrow0}\by_{[i+\frac{j}{m}]}=\by_{[i]}.
\]
We then have
\[
\lim_{\Delta t\rightarrow0}\frac{\by_{[i+1]}-\by_{[i]}}{\Delta t}=\lim_{\Delta t\rightarrow0}\sum_{j=1}^m\frac{\by_{[i+\frac{j}{m}]}-\by_{[i+\frac{j-1}{m}]}}{\Delta t}=\sum_{j=1}^m{\bg}_{[1]}(\by_{[i]},\btheta_j).
\]
Compared to the multi-stage numerical schemes such as the Runge-Kutta method for the numerical approximation of ODE, we use multiple stages to achieve the exact invertibility rather than a better accuracy.   
\end{remark}

\subsubsection{Generalize the model}
We generalize the model in lemma \ref{lem:neural_ode_afs} by integrating an augmented KRnet into the definition of the velocity field. We simply consider the recursive formula defined by an augmented KRnet:
\begin{equation}
\by_{\bgamma,[i+1]}=f_{\mathsf{KRnet}}(\by_{\bgamma,[i]})=(L_S\circ f^{m_K}_{[\mathsf{af},1,2],K})\circ\ldots\circ (L_S\circ f^{m_1}_{[\mathsf{af},1,2],1})(\by_{\bgamma,[i]}),
\end{equation}
where $f_{[\mathsf{af},1,2],k}$ indicates two affine coupling layers given by equations \eqref{eqn:affine_for_ode_one} and \eqref{eqn:affine_for_ode_two} at stage $k$, and the active dimensions for $f^{m_k}_{[\mathsf{af},1,2],k}$ are defined with respect to figure \ref{fig:augmented_structure_KR}. Following remark \ref{rmk:multi_stage_mapping}, $f_{\mathsf{KRnet}}(\cdot)$ can be understood as a one-step method, where multiple stages are used to maintain the exact invertibility. In other words, the following limit exists
\[
\lim_{\Delta t\rightarrow0}\frac{\by_{\bgamma,[i+1]}-\by_{\bgamma,[i]}}{\Delta t}=\lim_{\Delta t\rightarrow0}\frac{f_{\mathsf{KRnet}}(\by_{\bgamma,[i]})-\by_{\bgamma,[i]}}{\Delta t}=\bg_{\mathsf{KRnet}}(\by_{\bgamma,[i]}),
\]  
which suggests a dynamical system
\begin{equation}\label{eqn:krnet_ds}
\frac{d\by_{\bgamma}}{dt}=\bg_{\mathsf{KRnet}}(\by_{\bgamma}).
\end{equation}
\begin{figure}	\center{
\includegraphics[width=0.45\textwidth]{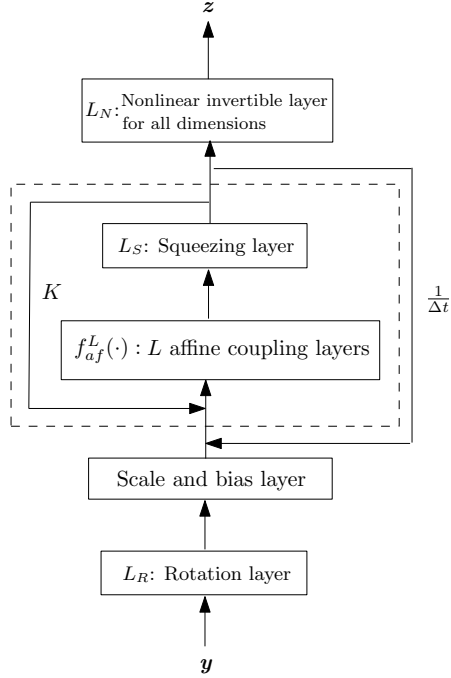}
	}
	\caption{The flow chart of an neural ODE based on an invertible recursive mapping given by KRnet. The dashed rectangle indicates the operation for each time step, where the time interval $[0,1]$ is uniformly discretized with step size $\Delta t$. }\label{fig:structure_diagram_aug_neural_ODE}
\end{figure}

\subsubsection{The adjoint method for an invertible mapping}
One difficulty of neural ODE is that as the time step size decreases the size of the computation graph for automatic differentiation may explode and exhaust the computer memory quickly. We then need to consider the adjoint method to compute the gradient for the optimizer. The adjoint method is defined with respect to an ODE, where the system is invertible. For a certain path from $\bx(0)$ to $\bx(T)$, the adjoint method needs to integrate the ODE \eqref{eqn:ODE} backwardly from $\bx(T)$ to $\bx(0)$. Once the ODE is discretized, the exact invertibility will be lost at the discrete level, implying that the adjoint method in general cannot yield the gradient up to the machine accuracy. 

In our model, we do not need to formulate the adjoint method in terms of the ODE since the exact invertibility is kept by definition. We first consider the following optimization problem
\begin{equation}\label{eqn:L_ce}
\min_{\btheta} L=-\mathbb{E}_{p_{\mathsf{data}}}\log p_{\bY}(\by;\btheta)=-\frac{1}{N}\sum_{j=1}^N\log p_{\bY}(\by^{(j)};\btheta)
\end{equation}
subject to the following constraints:
\begin{equation}
\by_{[i+1]}=F(\by_{[i]},\btheta),\quad i=0,1,\ldots,n-1,
\end{equation}
where $F$ can be regarded as an invertible mapping defined by equation \eqref{eqn:af_2_delta_t}, and the subscript $*_i$ indicates the temporal discretization. We let $\by_{[0]}=\by$ and assume $\by_{[n]}=\bz$ has a standard Gaussian distribution. From equation \eqref{eqn:pdf_model}, we have
\begin{align}
\log p_{\bY}(\by)=&\log p_{\bZ}(\bz)+\sum_{i=0}^{n-1}\log|\det\nabla_{\by_{[i]}}\by_{[i+1]}|\nonumber\\
=&\log p_{\bZ}(\bz)+\sum_{i=0}^{n-1}g_{[i]}(\by_{[i]},\btheta),
\end{align}
where $g_{[i]}(\by_{[i]},\btheta)$ can be explicitly computed by the definition of the affine coupling layer. For simplicity, we only consider one data point and ignore the superscript such that
\begin{equation}\label{eqn:one_point_CE_loss}
L=-\log p_{\bY}(\by;\btheta).
\end{equation}
To compute $\nabla_{\btheta}L$, we consider the following Lagrangian:
\begin{equation}
\mathcal{L}=-\log p_{\bZ}(\bz)-\sum_{i=0}^{n-1}g_{[i]}(\by_{[i]},\btheta)-\sum_{i=0}^{n-1}\blambda_{[i]}^\mathsf{T}(\by_{[i+1]}-F_{[i]}(\by_{[i]},\btheta)).
\end{equation}
The key idea of the adjoint method is to choose appropriate Lagrange multipliers $\blambda_{[i]}$ such that the computation of the gradient is convenient. We have
\begin{align*}
\nabla_{\btheta}\mathcal{L}=&(\nabla_{\btheta}\bz)^\mathsf{T}(-\nabla_{\bz}\log p_{\bZ}(\bz))-\sum_{i=1}^{n-1}\left((\nabla_{\btheta}\by_{[i]})^\mathsf{T}\nabla_{\by_{[i]}}g_{[i]}+\nabla_{\btheta}g_{[i]}\right)-\nabla_{\btheta}g_{[0]}\\
&-\sum_{i=1}^{n-1}\left(\nabla_{\btheta}\by_{[i+1]}-\nabla_{\by_{[i]}}F_{[i]}\nabla_{\btheta}\by_{[i]}-\nabla_{\btheta}F_{[i]}\right)^\mathsf{T}\blambda_{[i]}\\
&-\left(\nabla_{\btheta}\by_{[1]}-\nabla_{\btheta}F_{[0]}\right)^\mathsf{T}\blambda_{[0]}\\
=&(\nabla_{\btheta}\bz)^\mathsf{T}(-\nabla_{\bz}\log p_{\bZ}(\bz)-\blambda_{[n-1]})-\sum_{i=0}^{n-1}\nabla_{\btheta}g_{[i]}+\sum_{i=0}^{n-1}(\nabla_{\btheta}F_{[i]})^\mathsf{T}\blambda_{[i]}\\
&-\sum_{i=1}^{n-1}(\nabla_{\btheta}\by_{[i]})^\mathsf{T}(\blambda_{[i-1]}-(\nabla_{\by_{[i]}}F_{[i]})^\mathsf{T}\blambda_{[i]}+\nabla_{\by_{[i]}}g_{[i]}).
\end{align*}
We let
\begin{equation}
\left\{
\begin{array}{lcl}
\blambda_{[i-1]}&=&(\nabla_{\by_{[i]}}F_{[i]})^\mathsf{T}\blambda_{[i]}-\nabla_{\by_{[i]}}g_{[i]},\quad i=n-1,\ldots,1\\
\blambda_{[n-1]}&=&-\nabla_{\bz}\log p_{\bZ}(\bz)
\end{array}
\right.
\end{equation}
and obtain that
\begin{equation}
\nabla_{\btheta}\mathcal{L}=\nabla_{\btheta}L=\sum_{i=0}^{n-1}\left((\nabla_{\btheta}F_{[i]})^\mathsf{T}\blambda_{[i]}-\nabla_{\btheta}g_{[i]}\right).
\end{equation}

Except for the neural ODE layer, we may add other types of layers into the model. The following lemma provides a more general result for the adjoint method in terms of an invertible mapping:
\begin{lemma}\label{lem:adjoint_invertible_mapping}
Consider a general invertible mapping:
\begin{equation}
\by_{[i+1]}=F_{[i]}(\by_{[i]};\btheta_{[i]}),\quad \by_{[0]}=\by,\quad \by_{[n]}=\bz, \quad i=0,\ldots,n-1,
\end{equation}
where we let
\[
g_{[i]}(\by_{[i]},\btheta_{[i]})=\log|\det\nabla_{\by_{[i]}}\by_{[i+1]}|.
\]
Assume that the loss $L$ is given by equation \eqref{eqn:one_point_CE_loss}. The following two sequences $\blambda_{[i]}$ and $\by_{[i]}$ can be computed backwardly:
\begin{equation}
\left\{
\begin{array}{l}
\blambda_{[i-1]}=(\nabla_{\by_{[i]}}F_{[i]})^\mathsf{T}\blambda_{[i]}-\nabla_{\by_{[i]}}g_{[i]},\quad i=n-1,\ldots,1\\
\blambda_{n-1}=-\nabla_{\bz}\log p_{\bZ}(\bz),\\
\by_{[i]}=F^{-1}_{[i]}(\by_{[i+1]},\btheta_{[i]}),\quad i=n-1,\ldots,0.
\end{array}
\right.
\end{equation}
We have
\begin{equation}
\nabla_{\btheta_{[i]}}L=(\nabla_{\btheta_{[i]}}F_{[i]})^{\mathsf{T}}\blambda_{[i]}-\nabla_{\btheta_{[i]}}g_{[i]},\quad i=0,1,\ldots,n-1.
\end{equation}
If $\btheta_{[i]}=\btheta$ for $i\in\mathcal{I}\subset\{0,1,\ldots,n-1\}$, we have
\begin{equation}
\nabla_{\btheta}L=\sum_{i\in\mathcal{I}}\left((\nabla_{\btheta}F_{[i]})^{\mathsf{T}}\blambda_{[i]}-\nabla_{\btheta}g_{[i]}\right).
\end{equation}
\end{lemma}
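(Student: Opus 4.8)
The plan is to reproduce, in this general setting, the Lagrangian computation carried out immediately before the statement for the specific neural ODE layer. First I would invoke the change-of-variables identity \eqref{eqn:pdf_model} along the chain $\by=\by_{[0]}\mapsto\by_{[1]}\mapsto\cdots\mapsto\by_{[n]}=\bz$ to rewrite the one-point loss \eqref{eqn:one_point_CE_loss} as
\[
L=-\log p_{\bZ}(\bz)-\sum_{i=0}^{n-1}g_{[i]}(\by_{[i]},\btheta_{[i]}),
\]
with $g_{[i]}=\log|\det\nabla_{\by_{[i]}}\by_{[i+1]}|$ as in the hypothesis. Here each $\by_{[i]}$ with $i\ge 1$ is, through the recursion $\by_{[i+1]}=F_{[i]}(\by_{[i]};\btheta_{[i]})$, a function of $\btheta_{[0]},\ldots,\btheta_{[i-1]}$, whereas $\by_{[0]}=\by$ is fixed.

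Next I would introduce the Lagrangian
\[
\mathcal{L}=-\log p_{\bZ}(\bz)-\sum_{i=0}^{n-1}g_{[i]}(\by_{[i]},\btheta_{[i]})-\sum_{i=0}^{n-1}\blambda_{[i]}^{\mathsf{T}}\left(\by_{[i+1]}-F_{[i]}(\by_{[i]},\btheta_{[i]})\right),
\]
with arbitrary multipliers $\blambda_{[i]}$. Since the constraints hold identically, $\mathcal{L}\equiv L$, hence $\nabla_{\btheta_{[j]}}\mathcal{L}=\nabla_{\btheta_{[j]}}L$ for every $j$, independently of the choice of $\blambda_{[i]}$. I would then differentiate $\mathcal{L}$, apply the chain rule through each intermediate $\by_{[i]}$, and regroup so that the coefficient of $\nabla_{\btheta_{[j]}}\bz$ and of each $\nabla_{\btheta_{[j]}}\by_{[i]}$, $i=1,\ldots,n-1$, is isolated; this is precisely the rearrangement already displayed in the special case, only with $F$ and $\btheta$ now carrying the index $i$. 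Choosing $\blambda_{[n-1]}=-\nabla_{\bz}\log p_{\bZ}(\bz)$ annihilates the $\nabla_{\btheta_{[j]}}\bz$ term, and the backward recursion $\blambda_{[i-1]}=(\nabla_{\by_{[i]}}F_{[i]})^{\mathsf{T}}\blambda_{[i]}-\nabla_{\by_{[i]}}g_{[i]}$ for $i=n-1,\ldots,1$ kills all remaining $\nabla_{\btheta_{[j]}}\by_{[i]}$ terms (no $i=0$ term occurs because $\by_{[0]}=\by$ is fixed). What survives is the explicit dependence through $F_{[j]}$ and $g_{[j]}$ alone, giving $\nabla_{\btheta_{[j]}}L=(\nabla_{\btheta_{[j]}}F_{[j]})^{\mathsf{T}}\blambda_{[j]}-\nabla_{\btheta_{[j]}}g_{[j]}$; when $\btheta_{[i]}=\btheta$ for $i\in\mathcal{I}$, the chain rule $\nabla_{\btheta}=\sum_{i\in\mathcal{I}}\nabla_{\btheta_{[i]}}$ yields the stated summed formula. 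Finally, evaluating these recursions requires the values $\by_{[i]}$, $\nabla_{\by_{[i]}}F_{[i]}$ and $\nabla_{\by_{[i]}}g_{[i]}$, which I would reconstruct backward via $\by_{[i]}=F^{-1}_{[i]}(\by_{[i+1]},\btheta_{[i]})$ instead of storing the entire forward trajectory.

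The routine part of the argument is the index bookkeeping in the telescoping sum and keeping track of which boundary terms ($g_{[0]}$, the term in $\nabla_{\btheta}\by_{[1]}$, the term in $\nabla_{\btheta}\bz$) survive. The only genuine point worth emphasizing is that the legitimacy of the backward state recursion rests on exact invertibility of every $F_{[i]}$ — the very property the paper insists on maintaining at the discrete level — since without it the reconstructed $\by_{[i]}$ would differ from the forward values and the computed gradient would no longer be exact.
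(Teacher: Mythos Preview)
Your proposal is correct and follows essentially the same route as the paper: the identical Lagrangian is introduced, differentiated with respect to each $\btheta_{[k]}$, and the multipliers are chosen via the same backward recursion to cancel the $\nabla_{\btheta_{[k]}}\by_{[i]}$ terms. The paper merely writes out the two cases $0\le k<n-1$ and $k=n-1$ separately, whereas you describe the bookkeeping verbally; otherwise the arguments coincide.
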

\begin{proof}
We consider the Lagrangian
\begin{equation}
\mathcal{L}=-\log p_{\bZ}(\bz)-\sum_{i=0}^{n-1}g_{[i]}(\by_{[i]},\btheta_i)-\sum_{i=0}^{n-1}\blambda_{[i]}^\mathsf{T}(\by_{[i+1]}-F_{[i]}(\by_{[i]},\btheta_{[i]})).
\end{equation}
For $\btheta_{[k]}$, $0\leq k <  n-1$, we have
\begin{align*}
\nabla_{\btheta_{[k]}}\mathcal{L}=&(\nabla_{\btheta_{[k]}}\bz)^\mathsf{T}(-\nabla_{\bz}\log p_{\bZ}(\bz))-\sum_{i=k+1}^{n-1}(\nabla_{\btheta_{[k]}}\by_{[i]})^\mathsf{T}\nabla_{\by_{[i]}}g_{[i]}-\nabla_{\btheta_{[k]}}g_{[k]}\\
&-\sum_{i=k+1}^{n-1}\left(\nabla_{\btheta_{[k]}}\by_{[i+1]}-\nabla_{\by_{[i]}}F_{[i]}(\by_{[i]},\btheta_{[i]})\nabla_{\btheta_{[k]}}\by_{[i]}\right)^{\mathsf{T}}\blambda_{[i]}\\
&-(\nabla_{\btheta_{[k]}}\by_{[k+1]}-\nabla_{\btheta_{[k]}}F_{[k]}(\by_{[k]},\btheta_{[k]}))^{\mathsf{T}}\blambda_{[k]}\\
=&(\nabla_{\btheta_{[k]}}\bz)^\mathsf{T}(-\nabla_{\bz}\log p_{\bZ}(\bz)-\blambda_{[n-1]})\\
&-\sum_{i=k+1}^{n-1}(\nabla_{\btheta_{[k]}}\by_{[i]})^\mathsf{T}(\nabla_{\by_{[i]}}g_{[i]}-(\nabla_{\by_{[i]}}F_{[i]}(\by_{[i]},\btheta_{[i]}))^{\mathsf{T}}\blambda_{[i]}+\blambda_{[i-1]})\\
&+(\nabla_{\btheta_{[k]}}F_{[k]}(\by_{[k]},\btheta_{[k]}))^{\mathsf{T}}\blambda_{[k]}-\nabla_{\btheta_{[k]}}g_{[k]}.
\end{align*}
If $k=n-1$, we have
\begin{align*}
\nabla_{\btheta_{[k]}}\mathcal{L}
=&(\nabla_{\btheta_{[n-1]}}\bz)^\mathsf{T}(-\nabla_{\bz}\log p_{\bZ}(\bz)-\blambda_{[n-1]})\\
&+(\nabla_{\btheta_{[n-1]}}F_{[n-1]}(\by_{[n-1]},\btheta_{[n-1]}))^{\mathsf{T}}\blambda_{[n-1]}-\nabla_{\btheta_{[n-1]}}g_{[n-1]}.
\end{align*}
Letting
\begin{equation}\label{eqn:recursive_theta_k}
\left\{
\begin{array}{lcl}
\blambda_{[i-1]}&=&(\nabla_{\by_{[i]}}F_{[i]})^{\mathsf{T}}\blambda_{[i]}-\nabla_{\by_{[i]}}g_{[i]},\quad i=n-1,\ldots,k+1,\\
\blambda_{[n-1]}&=&-\nabla_{\bz}\log p_{\bZ}(\bz),
\end{array}
\right.
\end{equation}
we have
\begin{equation}
\nabla_{\btheta_{[k]}}\mathcal{L}=(\nabla_{\btheta_{[k]}}F_{[k]}(\by_{[k]},\btheta_{[k]}))^{\mathsf{T}}\blambda_{[k]}-\nabla_{\btheta_{[k]}}g_{[k]}.
\end{equation}
Since the recursive formula \eqref{eqn:recursive_theta_k} holds for any $0\leq k\leq n-1$, we obtain the conclusion. 
\end{proof}
\begin{remark}
For the gradient of the cross entropy \eqref{eqn:L_ce} defined by $N$ data points, we need to collect the contributions from all data points to the gradient using Lemma \ref{lem:adjoint_invertible_mapping}. We have
\begin{equation}
\nabla_{\btheta}L=\frac{1}{N}\sum_{i=1}^N\sum_{j\in\mathcal{I}}\left((\nabla_{\btheta}F_{[j]}(\by_{[j]}^{(i)},\btheta_{[j]}))^{\mathsf{T}}\blambda_{[j]}^{(i)}-\nabla_{\btheta}g_{[j]}(\by_{[j]}^{(i)},\btheta_{[j]})\right),
\end{equation}
where $\btheta_{[j]}=\btheta$ for $j\in \mathcal{I}$, and the superscript $*^{(i)}$ indicates each data point.
\end{remark}

\subsection{A summary of the main features of KRnet}
To this end, we we have developed various techniques that either improve the performance of KRnet as a discrete model or reformulate it as a continuous model. We summarize some useful features of KRnet as follows:
\begin{enumerate}
\item \emph{The Knothe-Rosenblatt rearrangement} defines the main structure of the KRnet for both the discrete and continuous models. 
\item \emph{The rotation layer} provides a mechanism, which is similar to the principle component analysis, to pick a certain set of dimensions to deactivate.
\item \emph{The nonlinear layer} provides a much larger family of prior distributions than the commonly used standard Gaussian distributions through a component-wise nonlinear transformation. 
\item \emph{The augmented dimensions} provide a buffer zone for the data dimensions to exchange nonlinear information more effectively.
\item \emph{The KRnet\_ODE} integrates KRnet into a continuous model as neural ODE while the exact invertibility is maintained. The adjoint method can be formulated with respect to the discrete model instead of the continuous one such that the gradient of the loss can be computed exactly. 
\end{enumerate}
The features 1-4 can be coupled to improve the performance of a discrete model; The features 1, 4 and 5 can be coupled to improve the performance of a continuous model. 

\subsection{Density estimation and approximation via KRnet}
The developed KRnets may be used to construct a PDF model for both density estimation and approximation. For density estimation, we assume that the empirical distribution $p_{\mathsf{data}}(\by)$ is given, and for density estimation, we assume that the unnormalized PDF $\hat{p}_{\bY}(\by)=Cp_{\mathsf{ref},\bY}(\by)$ is given, where $p_{\mathsf{ref},\bY}$ is the true PDF and $C$ is an unknown constant. For both density estimation and approximation, we can use the Kullback-Leibler (KL) divergence to minimize the difference between the given distribution and the PDF model $p_{\mathsf{KRnet}}(\by)$ based on KRnet. 

For density estimation, we consider the KL divergence
\begin{equation}
D_{\mathsf{KL}}(p_{\mathsf{data}}\|p_{\mathsf{KRnet},\bY})=h(p_{\mathsf{data}},p_{\mathsf{KRnet},\bY})-h(p_{\mathsf{data}}),
\end{equation}
where the first term on the right-hand side is the differential cross entropy of $p_{\mathsf{KRnet},\bY}$ relative to $p_{\mathsf{data}}$, and the second term is the differential entropy of $p_{\mathsf{data}}$. Since $h(p_{\mathsf{data}})$ is independent of $p_{\mathsf{KRnet},\bY}$, minimizing the KL divergence $D_{\mathsf{KL}}(p_{\mathsf{data}}\|p_{\mathsf{KRnet},\bY})$ is equivalent to minimizing the differential cross entropy $h(p_{\mathsf{data}},p_{\mathsf{KRnet},\bY})$, which is also equivalent to maximizing the  likelihood.

For density approximation, we consider the KL divergence 
\begin{align}
D_{\mathsf{KL}}(p_{\mathsf{KRnet},\bY}\|p_{\mathsf{ref},\bY})&=\mathbb{E}_{p_{\mathsf{KRnet},\bY}}\left[\ln\frac{p_{\mathsf{KRnet},\bY}}{\hat{p}_{\bY}}d\by\right]+\ln C,\nonumber\\
&\approx\frac{1}{N}\sum_{i=1}^N\ln\frac{p_{\mathsf{KRnet},\bY}(\by^{(i)})}{\hat{p}_{\bY}(\by^{(i)})},
\end{align}
where $\{\by^{(i)}\}_{i=1}^N$ are samples from $p_{\mathsf{KRnet}}$. It is seen that we only need to minimize the first term on the right-hand side and the unknown constant $C$ does not affect the optimization. In contrast to the density estimation, we use the relative entropy from $p_{\mathsf{ref},\bY}$ to $p_{\mathsf{KRnet},\bY}$ to avoid the integration in terms of $p_{\mathsf{ref},\bY}$, where the integration with respect to $p_{\mathsf{KRnet}}$ can be easily approximated by the Monte Carlo method thanks to the generative model. For the augmented KRnet, we may consider the following KL divergence:
\begin{equation}\label{eqn:KL_for_approx}
	D_{\mathsf{KL}}(p_{\mathsf{KRnet\_aug},\bY_{\bgamma}}(\bY,\bgamma)\|p_{\mathsf{ref},\bY}p_{\bgamma}),
\end{equation}
where $p_{\mathsf{KRnet\_aug},\bY_{\bgamma}}(\bY,\bgamma)$ is the joint PDF of $\bY$ and $\bgamma$ induced by the augmented KRnet. 

\section{Numerical examples}\label{sec:num}
In this section we present some numerical experiments including one-, two-, four- and eight-dimensional problems, where PDFs with different types of support are considered. All the models have been trained with ADAM method subject to a fixed learning rate 0.001 \cite{ADAM_2017}. If no additional clarification is given, the neural networks \eqref{eqn:NN} for the affine coupling layer always have two fully-connected hidden layers
of 24 neurons. When the nonlinear invertible layers (see equation \eqref{eqn:nonlinear_layer_def}) are needed, the interval $[-20,20]$, i.e., $a=20$, is discretized to 32 elements, and $\beta=10^{-10}$. The elements are nonuniform, where the element size increases from the middle to both sides with a ratio 1.15.

\subsection{The augmented ODE model for the approximation of 1d PDFs}
The simplest case of equation \eqref{eqn:krnet_ds} includes one data dimension and one augmented dimension, which takes the following form:
\begin{equation}\label{eqn:f2_ds_KRnet_1d}
\left\{
\begin{array}{rclrcl}
\dot{\gamma}&=&v_1(\gamma,y)&=&\gamma w_{1}(y)+b_{1}(y),\\
\dot{y}&=&v_2(\gamma,y)&=&y w_{2}(\gamma)+b_{2}(\gamma),
\end{array}
\right.
\end{equation}
subject to the constraint
\begin{equation}\label{eqn:bc_1d}
\rho_{t=0}(\gamma,y)=p(\gamma)f(y),\quad \rho_{t=1}(\gamma,y)=p(\gamma)p(y),
\end{equation}
where $p(\cdot)$ is a standard Gaussian PDF and $f(\cdot)$ an arbitrary PDF. We know that $\rho$ satisfies the Liouville equation:
\begin{equation}
\partial_t\rho+\nabla\cdot(\rho\bv)=0,
\end{equation}
from which we have
\begin{equation}
\partial_t\ln\rho=\frac{1}{\rho}\partial_t\rho=-\frac{1}{\rho}(\rho\nabla\cdot\bv+\bv\cdot\nabla\rho)=-\nabla\cdot\bv-\bv\cdot\nabla\ln\rho,
\end{equation}
i.e.,
\begin{equation}\label{eqn:ln_rho_1d_aug}
\frac{d\ln\rho}{dt}=-\nabla\cdot\bv=-w_1(y(t))-w_2(\gamma(t)),
\end{equation}
subject to the boundary conditions \eqref{eqn:bc_1d}. Due to the exact invertibility, the right-hand side of equation \eqref{eqn:ln_rho_1d_aug} is given by two functions in terms of $y$ and $\gamma$ respectively. However, according to equation \eqref{eqn:f2_ds_KRnet_1d}, $y(t)$ and $\gamma(t)$ depend on each other. In terms of equation \eqref{eqn:ln_rho_1d_aug} and the boundary conditions \eqref{eqn:bc_1d}, $w_1(y)$, $w_2(\gamma)$, $b_1(y)$ and $b_2(\gamma)$ need to be chosen such that
\begin{equation}\label{eqn:ln_rho_1d_constraint}
	\ln p(\gamma(1))p(y(1))=\ln p(\gamma(0))f(y(0))-\int_0^1(w_1(y(t))+w_2(\gamma(t)))dt.
\end{equation}
Let us look at a simple case, where $w_1(\cdot)=w_2(\cdot)=0$.  In other words, the dynamics given by \eqref{eqn:f2_ds_KRnet_1d_simple} preserves volume. We have the ODE as 
\begin{equation}\label{eqn:f2_ds_KRnet_1d_simple}
\left\{
\begin{array}{rcl}
\dot{\gamma}&=&b_{1}(y),\\
\dot{y}&=&b_{2}(\gamma).
\end{array}
\right.
\end{equation}
Let $\hat{b}_1'(y)=b_1(y)$ and $\hat{b}_2'(\gamma)=b_2(\gamma)$. We then have
\begin{equation}\label{eqn:1st_integral_1d}
\hat{b}_2(\gamma(t))=\hat{b}_1(y(t))+C,
\end{equation}
where $C=\hat{b}_2(\gamma(0))-\hat{b}_1(y(0))$ is determined by the initial condition. Due to the first integral \eqref{eqn:1st_integral_1d}, we expect that both $b_1(y)$ and $b_2(\gamma)$ are complex enough for a good approximation. For example, if we simply let $b_2(\gamma)=1$, we have 
\[
y(t)=y(0)+t,\quad \gamma(t)=\hat{b}_1(y(t))-\hat{b}_1(y(0))-\gamma(0).
\]
We then model $b_1(y)$ such that
\[
p\left(y(0)+1\right)p\left(\gamma(0)+\hat{b}_1(y(0)+1)-\hat{b}_1(y(0))\right)\approx p(\gamma(0))f(y(0)).
\]
It is easy to see that no matter how complex $b_1(y)$ is the above approximation may not good enough since $y(t)$ and $\gamma(t)$ cannot be independent for the case that $b_2(\gamma)=1$. However, we should note $y(t)$ and $\gamma(t)$ may be independent of each other if they both depend on $y(0)$ and $\gamma(0)$ in a certain way. One example is the Box–Muller transform, which maps two independent uniform random variables to two independent Gaussian random variables through an invertible mapping. So both $b_1(y)$ and $b_2(\gamma)$ need to be complex enough. Furthermore, when $w_1(y)$ and $w_2(\gamma)$ are included into the model, the modeling capability will be improved further. 

Since $\gamma$ corresponds to an augmented dimension, equation \eqref{eqn:f2_ds_KRnet_1d} can be regarded as a neural ODE for the approximation of an arbitrary PDF $f(y)$. We now look at how well model \eqref{eqn:f2_ds_KRnet_1d} can evolve a standard Gaussian distribution $p(y)$ to an arbitrary distribution $f(y)$. We will consider four cases, where the support of $f(y)$ is $(-\infty,\infty)$, $(0,\infty)$, $[-1,1]$ and $[-1.5,-0.5]\cup[0.5,1.5]$, respectively. The training set has $3.2\times10^5$ samples. The Adams method is subject to 4 minibatches. Let us refer to model \eqref{eqn:f2_ds_KRnet_1d} as augmented KRnet\_ODE. We will compare its performance to the augmented KRnet. For the neural ODE, we consider a uniform temporal mesh with $\Delta t=0.1$. Both the augmented KRnet and KRnet\_ODE are defined by a sequence $f_{\mathsf{af}}^L(\cdot)$ of affine coupling layers between $y$ and $\gamma$, where $L$ is the number of affine coupling layers. In the augmented KRnet, $f_{\mathsf{af}}^L(\cdot)$ will achieve the whole transformation from data distribution to the prior distribution while in the augmented KRnet\_ODE, $f_{\mathsf{af}}^L(\cdot)$ only implements the transformation for one time step. Note that the definition of $f_{\mathsf{af}}(\cdot)$ for the KRnet is slightly different than that for the KRnet\_ODE (see equations \eqref{eqn:new_affine} and \eqref{eqn:affine_for_ode_two}).

The prior distribution is always the standard Gaussian no matter that the target distribution has a compact support or not. When the model $p_{\bY_{\bgamma}}$ converges to $f(y)p(\gamma)$, the loss function is
\begin{align*}
\mathbb{E}_{f(y)p(\gamma)}\left[\ln\frac{p(\gamma)}{p_{\bY_{\bgamma}}}\right]\rightarrow-\mathbb{E}_{f(y)p(\gamma)}\left[\ln f(y)\right]=-\mathbb{E}_{f(y)}\ln f(y),
\end{align*}
which is the differential entropy $h(f)$ of $f(y)$. We then define a relative error
\begin{equation}\label{eqn:ce_error}
\delta=\frac{|L-h(f)|}{h(f)}
\end{equation}
to measure the quality of the corresponding PDF model. We consider the following cases:

\noindent\textbf{Case (i)}: $f(y)$ is Logistic distribution on $(-\infty,\infty)$. 
Consider the logistic distribution with the location parameter $\mu=0$ and the scale parameter $s=2$. The differential entropy is $h(f)=2.0+\ln(2.0)$. The relative errors for this case are plotted in the left plot of figure \ref{fig:log_log}. It is seen that $L=2$ works well for both KRnet and KRnet\_ODE. The high oscillations are due to the uncertainty from data since the loss function is an approximation of the differential entropy given by the Monte Carlo method.  

\noindent\textbf{Case (ii)}: $f(y)$ is Lognormal distribution on $(0,\infty)$.
The lognormal distribution is given by the exponential function of a standard normal random variable. The differential entropy is $\ln(2\pi)/2+1/2$. The relative errors for this case are plotted in the right plot of figure \ref{fig:log_log}. Since the positive densities on $(-\infty,\infty)$ needs to be mapped to $(0,\infty)$, the transformation is more demanding than the previous case. When $L=2$, the KRnet\_ODE has a slightly smaller error than the KRnet. When $L=4$, both models have an error that is comparable to the uncertainty from data. 

\noindent\textbf{Case (iii)}: $f(y)$ is uniform on $[-1,1]$. The differential entropy for the uniform distribution is $\ln(2)$. For this case, the positive densities on $(-\infty,\infty)$ needs to be mapped to $[-1,1]$. As $L$ increases, the performance of both KRnet and KRnet\_ODE improves. It appears that the KRnet is more effective to reduce the loss while the KRnet\_ODE is more robust. It is seen that the error given by KRnet with $L=4$ is comparable to the error given by KRnet\_ODE with $K=8$. When $L=2$, it takes KRnet a long time to find a good local minimizer. 

\noindent\textbf{Case (iv)}: $f(y)$ is uniform on $[-1.5,0.5]\cup[0.5,1.5]$.
Compared to the previous uniform distribution, similar behavior is observed for both KRnet and KRnet\_ODE except that the error is larger for the same configuration due to the more demanding requirements on the transformation. We plot some approximate PDFs in figure \ref{fig:uniform_hole_1d_pdf} for this case and the lognormal distribution in case (ii). It is seen that the KRnet handles  discontinuities slightly better than the continuous flow defined by an ODE. 

Note that for all four cases, we map the prior Gaussian distribution defined on $(-\infty,\infty)$ to the data distribution whether the data are subject to a compact support or not. Both augmented KRnet and augmented KRnet\_ODE demonstrate effectiveness and flexibility for the density estimation.  Of course, we can integrate other techniques such as regularization and data preprocessing whenever necessary. For example, if the data are defined on a compact support, say $[\delta,1-\delta]$ with $\delta > 0$, we may use the Logistic transformation
\begin{equation}
y=\frac{s}{2}\log\frac{x}{1-x},\quad x=\frac{1}{2}(\tanh(x/s)+1)
\end{equation}
to map $x\in(0,1)$ to $y\in(-\infty,\infty)$ such that the data distribution and the prior distribution have the same support. The results of such a strategy are plotted in figure \ref{fig:uniform_hole_1d_mapped_pdf}. It is seen that the transition of KRnet at discontinuities is much sharper than that of KRnet\_ODE. 

\begin{figure}	
\center{\includegraphics[width=0.49\textwidth]{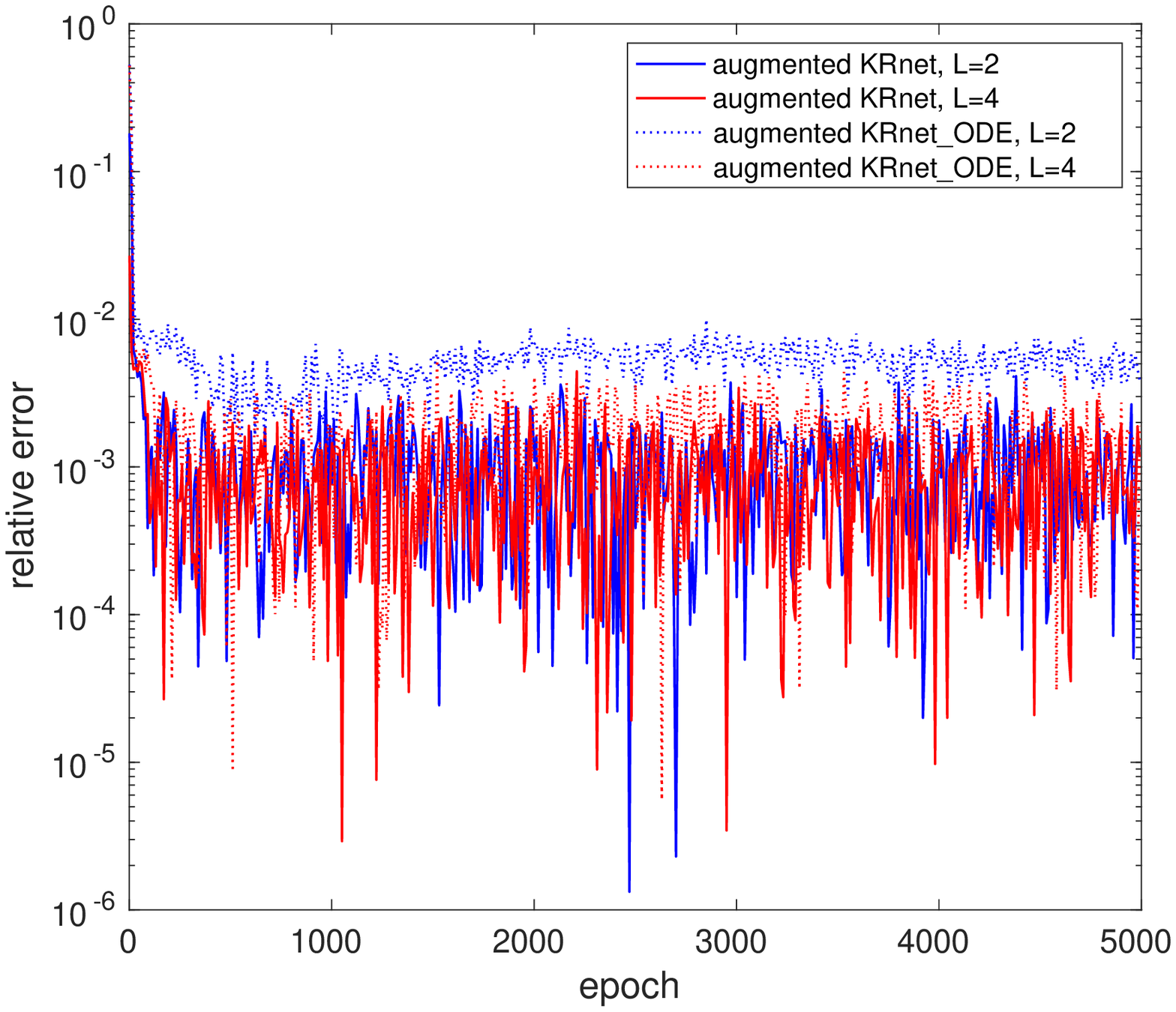}
\includegraphics[width=0.49\textwidth]{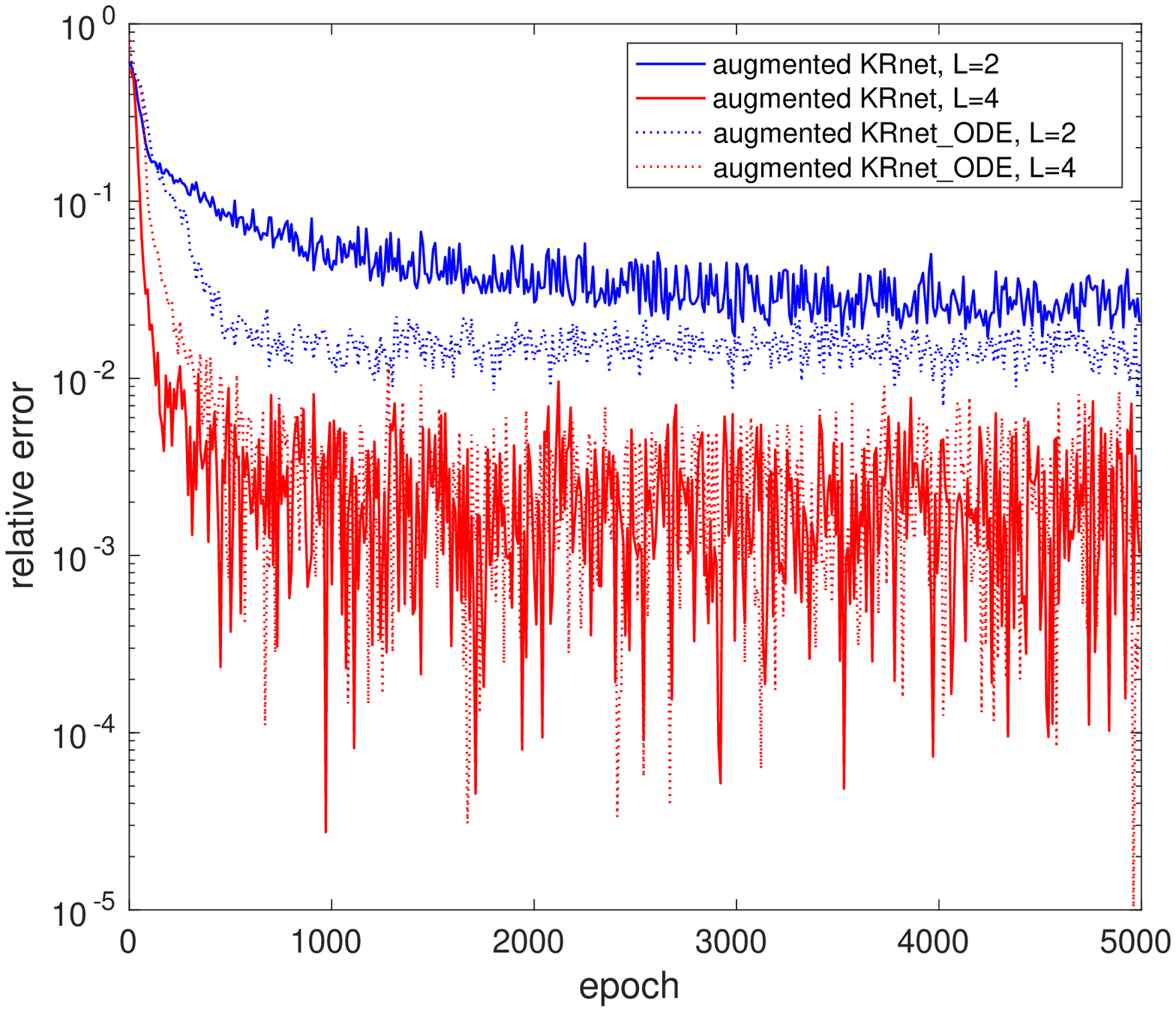}}
\caption{Compare the convergence behavior of KRnet\_aug and KRnet\_ODE. }\label{fig:log_log}
\end{figure}
\begin{figure}	
\center{\includegraphics[width=0.49\textwidth]{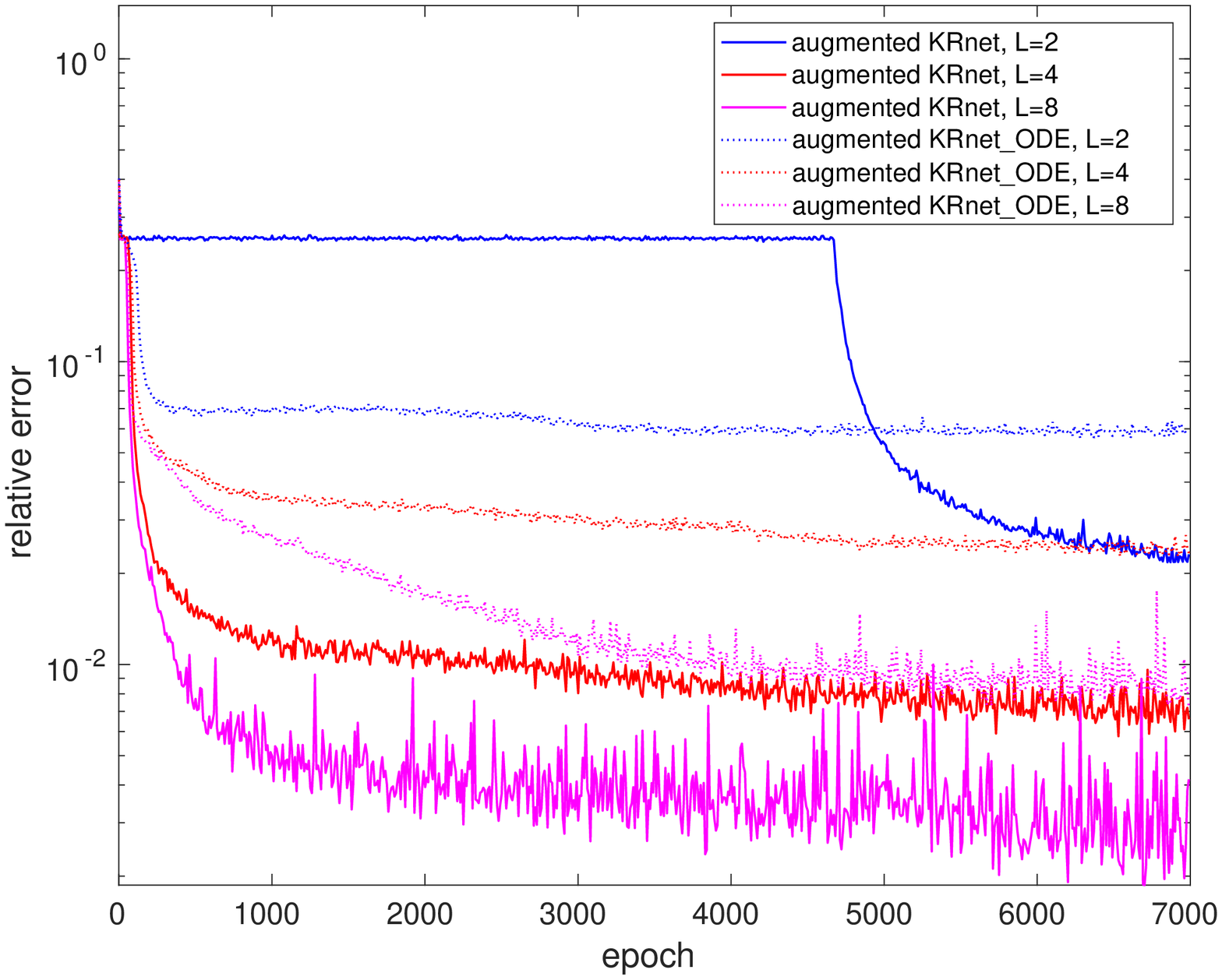}
\includegraphics[width=0.49\textwidth]{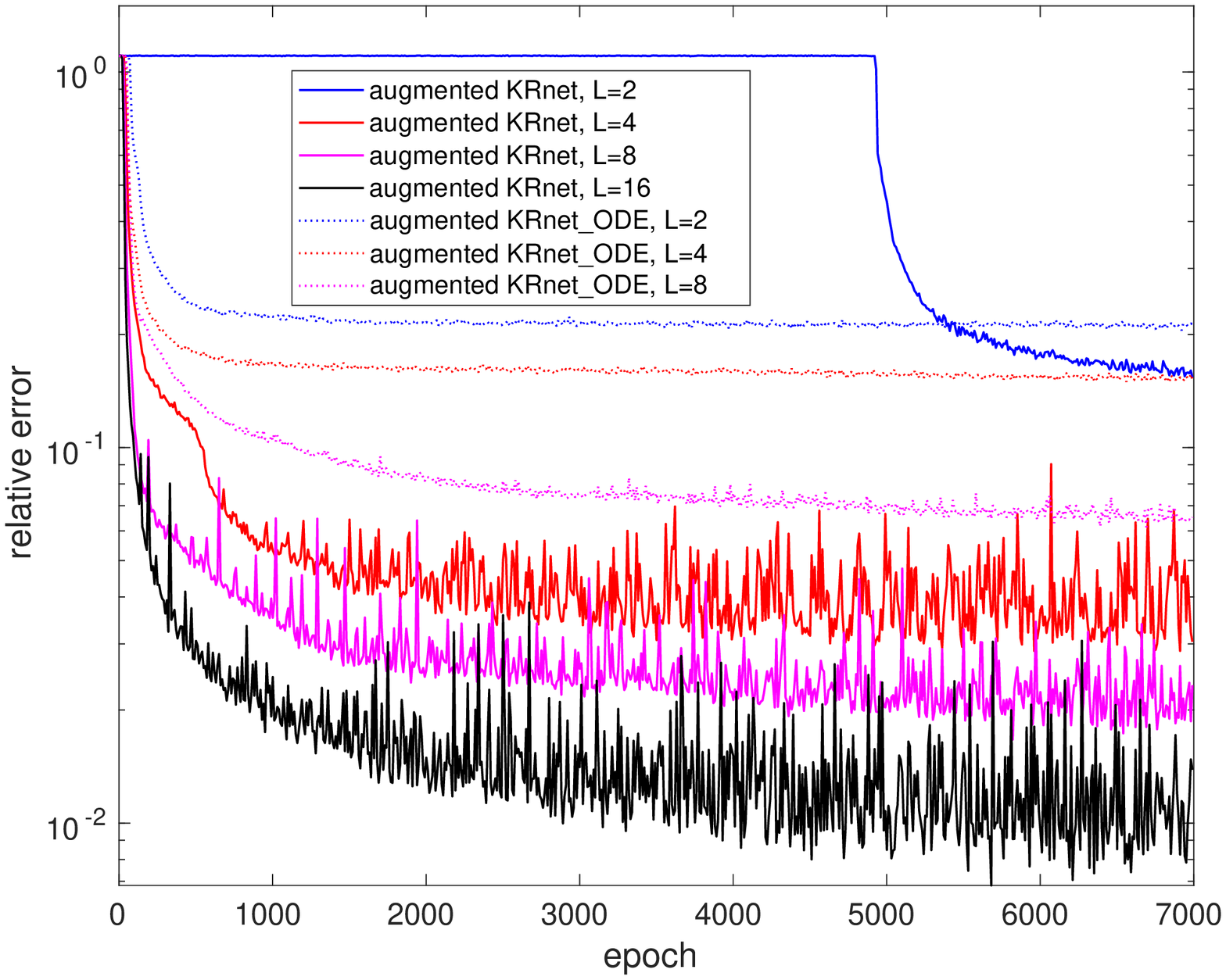}}
\caption{Compare the convergence behavior of KRnet\_aug and KRnet\_ODE. }\label{fig:u_u}
\end{figure}
\begin{figure}	
\center{
\includegraphics[width=0.49\textwidth]{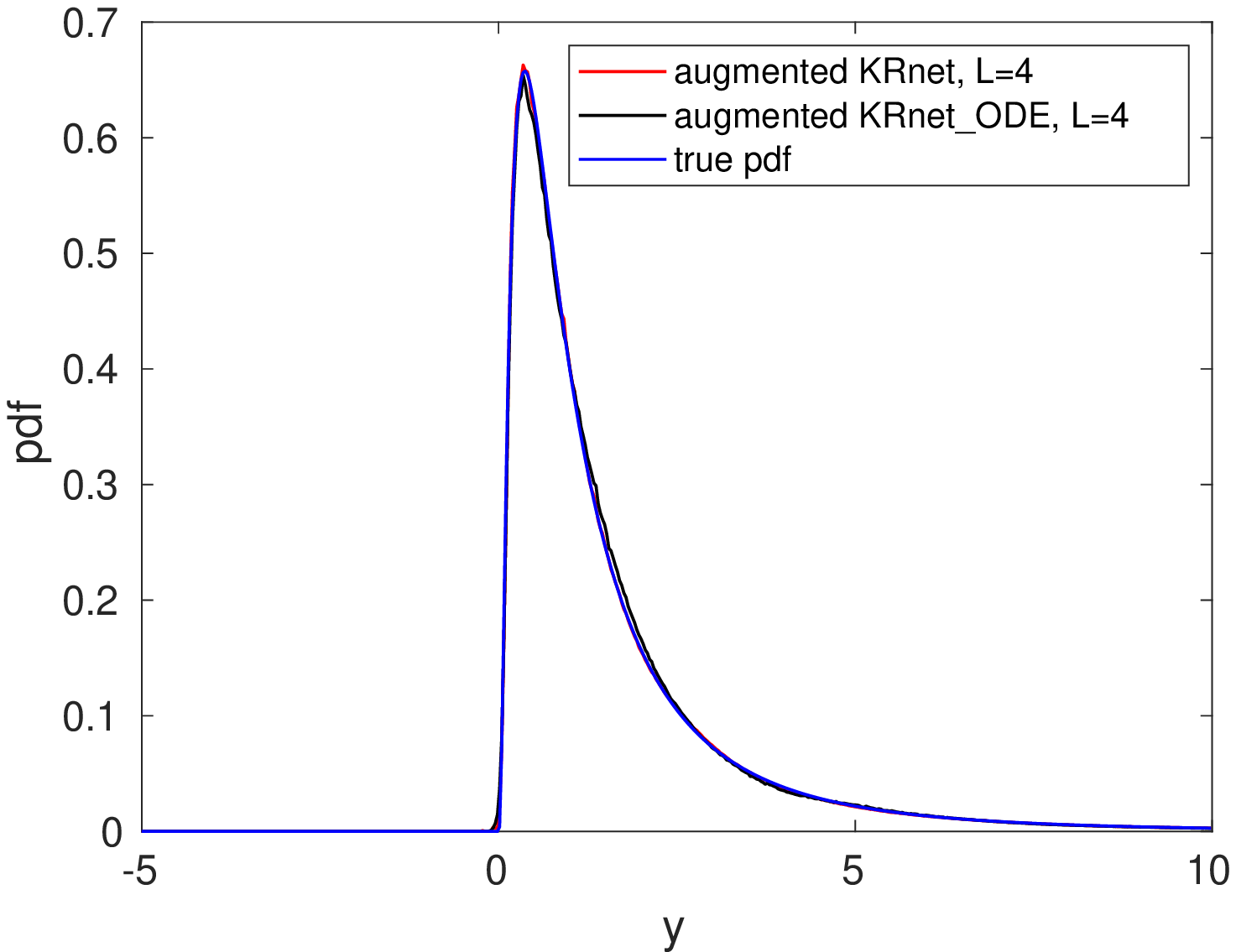}
\includegraphics[width=0.49\textwidth]{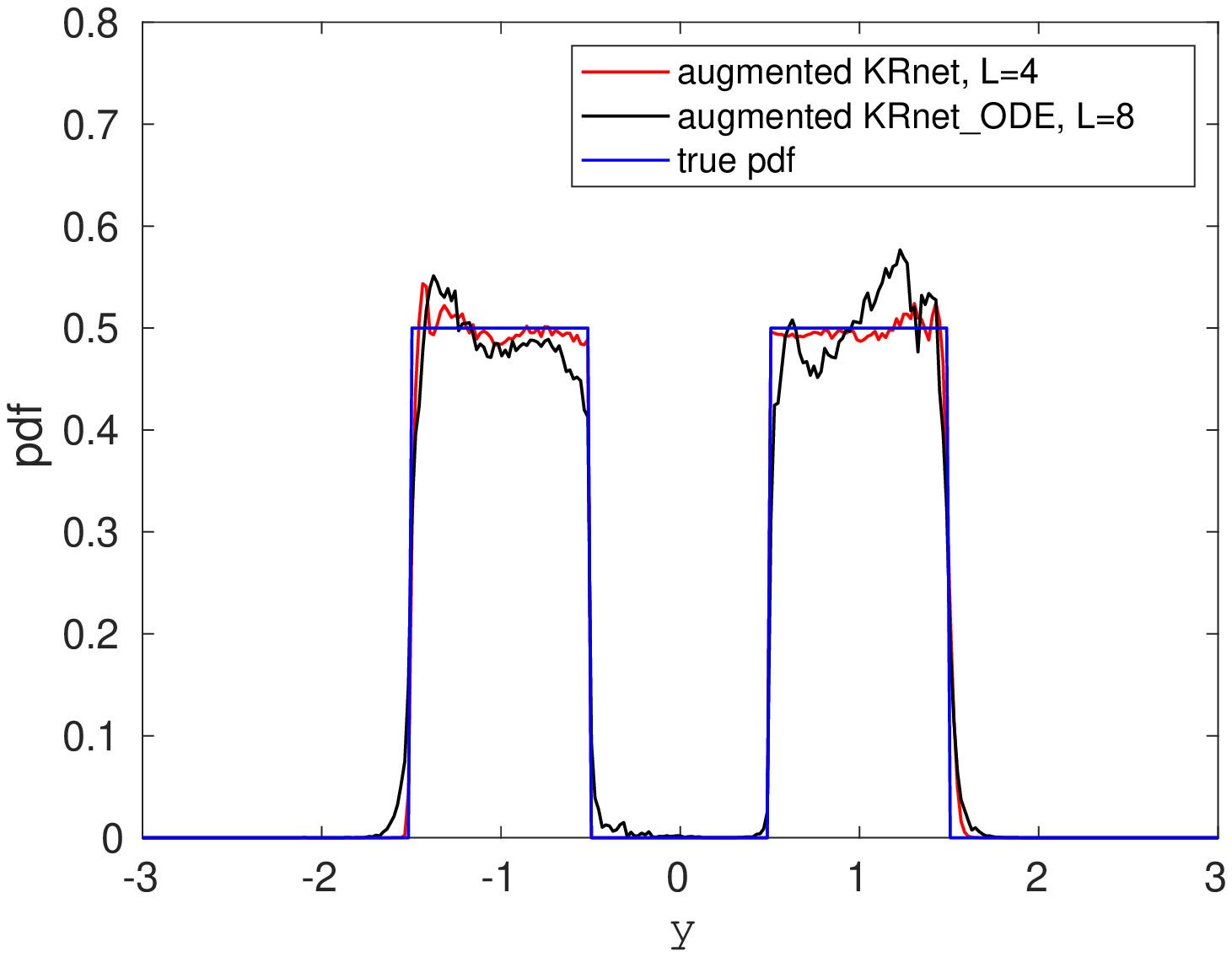}}
\caption{The approximated PDFs for the lognormal distribution and the uniform distribution with a hole. }\label{fig:uniform_hole_1d_pdf}
\end{figure}
\begin{figure}	
\center{\includegraphics[width=0.6\textwidth]{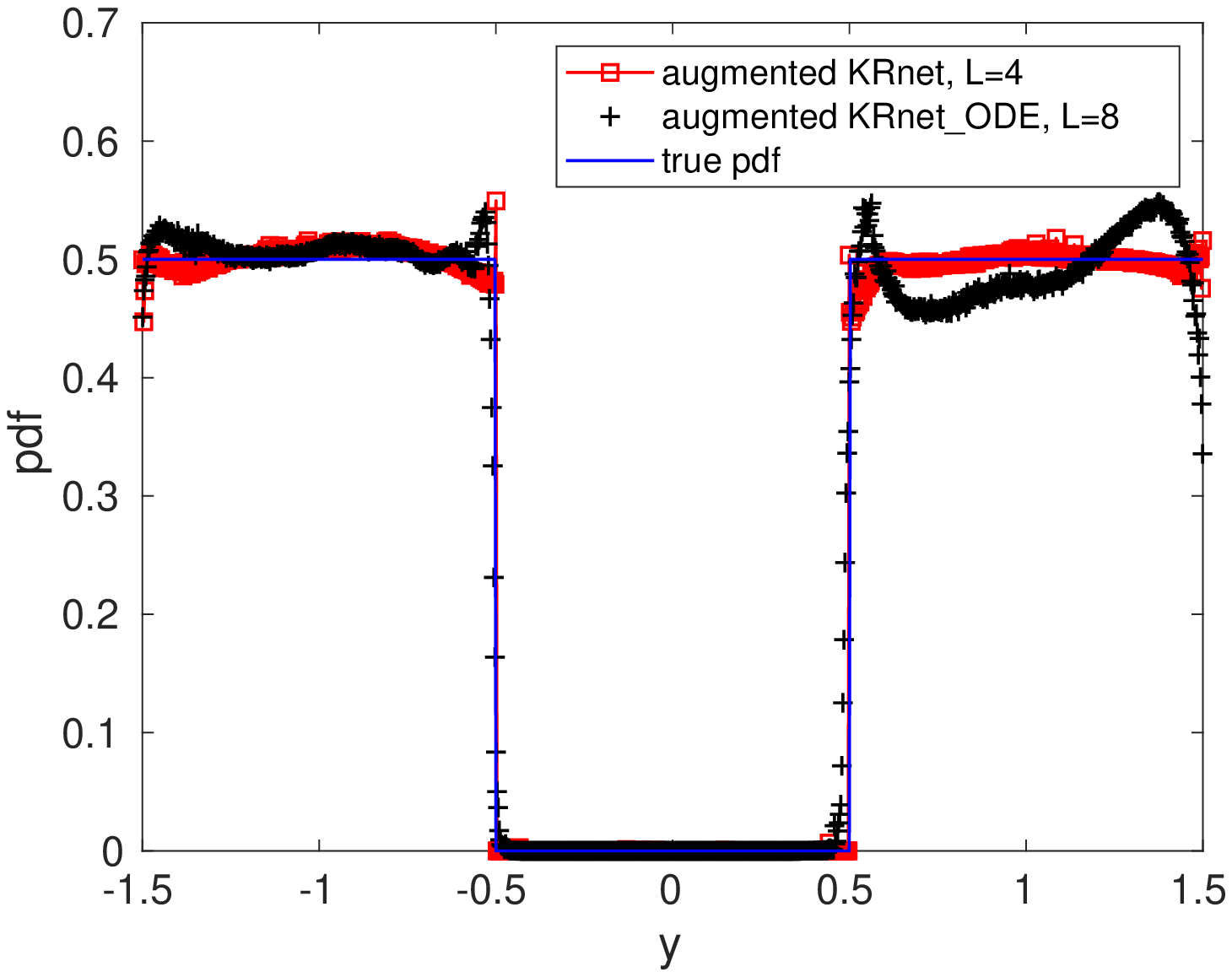}}
\caption{The approximated PDFs for the lognormal distribution and the uniform distribution with a hole. }\label{fig:uniform_hole_1d_mapped_pdf}
\end{figure}
\begin{table}
	\caption{Errors of some KRnet-based models for the density estimation of samples from the mixture of Gaussians \eqref{eqn:mixture_of_G}. In affine coupling layers, the neural network \eqref{eqn:NN} has two dense hidden layers, each of which has 24 neurons, and this number decays at a ratio $r=0.9$ in terms of the index $k$ of the outer loop of KRnet.\label{tbl:compare_KRnets}}
	\centering
	\begin{tabular}{|c|c|c|c|c|c|}
		\hline
		& {\small KRnet} & {\small KRnet\_aug} & {\small KRnet\_aug\_R\&N} & 
		{\small KRnet\_R\&N} & {\small KRnet\_ODE} \\ \hline
		{\small $L=2$}:&6.96e-2&1.02e-1&4.52e-2&1.50e-2&2.93e-2\\ \hline
		{\small $L=4$}:&1.74e-2&8.47e-3&1.29e-3&2.56e-3&1.67e-2\\ \hline
		{\small $L=6$}:&5.46e-3&1.53e-3&6.79e-4&1.56e-3&1.02e-2\\ \hline
	\end{tabular}
\end{table}  
\begin{figure}	
	\center{\includegraphics[width=0.99\textwidth]{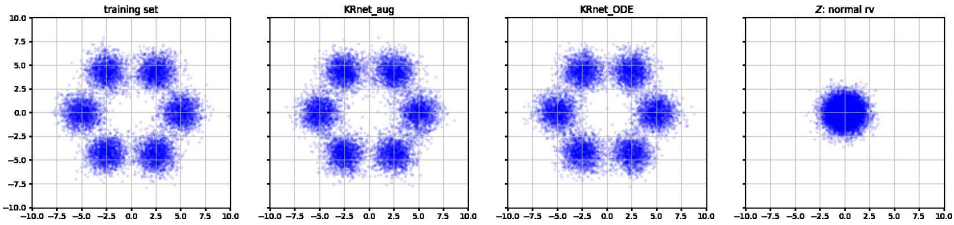}}
	\caption{Data distributions given by the training set, the augmented KRnet, and the KRnet-based neural ODE, where the prior Gaussian distribution has been mapped to the mixture of Gaussians \eqref{eqn:mixture_of_G}. $L=6$. }\label{fig:2d_mix_G_samples}
\end{figure}
\subsection{Two-dimensional mixture of Gaussians}
We consider a mixture of Gaussians 
\begin{equation}\label{eqn:mixture_of_G}
p_{\bY}(\by)=\frac{1}{6}\sum_{i=1}^6\mathcal{N}(\by_i,\mbI),
\end{equation}
where $\by_i=(5\cos\frac{i\pi}{3},5\sin\frac{i\pi}{3})$. We have six standard Gaussians uniformly located on a circle of radius 5. We examine and compare the following modeling techniques: 
\begin{itemize}
\item KRnet: This KRnet only keeps the triangular structure inspired by the K-R rearrangement. For two-dimensional problems, KRnet is consistent with the real NVP . 
\item KRnet\_aug: One augmented dimension is added to KRnet.
\item KRnet\_R\&N: The rotation layers and the nonlinear invertible layer are switched on for KRnet.
\item KRnet\_aug\_R\&N: The rotation layers and the nonlinear invertible layer are switched on for KRnet\_aug, where the rotation only acts on the data dimensions and does not affect the augmented dimension.
\item KRnet\_ODE: This is the neural ODE model based on the KRnet. 
\end{itemize}
For the numerical experiments, we obtain $6.4\times10^5$ samples from the mixture of Gaussians for the training set. We minimize the cross entropy between the empirical distribution and the model using 8 minibatches. The error is defined as the relative difference between the cross entropy and the differential entropy of the mixture of Gaussians, see equation \eqref{eqn:ce_error}, which can be regarded as the KL divergence between the model and the data distribution since the sample size is relatively large. For the KRnet\_ODE, the ODE is discretized on $[0,1]$ with a step size 0.05. 

All models have been trained using the same training set. For each model, we implement the training process ten times and define the mean of the ten errors as the final error. This way the bias from random initialization is reduced. For each training process, we run up to 8000 epochs. The results have been summarized in table \ref{tbl:compare_KRnets}. First of all, for each model the error decays as the number of affine coupling layers increases. Second, the KRnet\_ODE demonstrates a better performance than KRnet when $L$ is small, and is outperformed by KRnet when $L$ is large. However, KRnet\_ODE is significantly slower than KRnet. Third, the model KRnet\_aug\_R\&N yields the best performance, implying that the dimensional augmentation, the rotation layer and the nonlinear invertible layer are effective. When $L=2$, KRnet\_aug performs the worst. This is reasonable since the number of dimensions is increased by one. However, the KRnet\_aug has a fast decay in error. 

In figure \ref{fig:2d_mix_G_samples}, we compare the data distributions given by the training set, KRnet\_aug, and 
KRnet\_ODE for the case $L=6$ in Table \ref{tbl:compare_KRnets}. Both KRnet\_aug and KRnet\_ODE produce a  distribution that is visually the same as the data distribution given by the training set. 

\begin{figure}	
	\center{\includegraphics[width=0.7\textwidth]{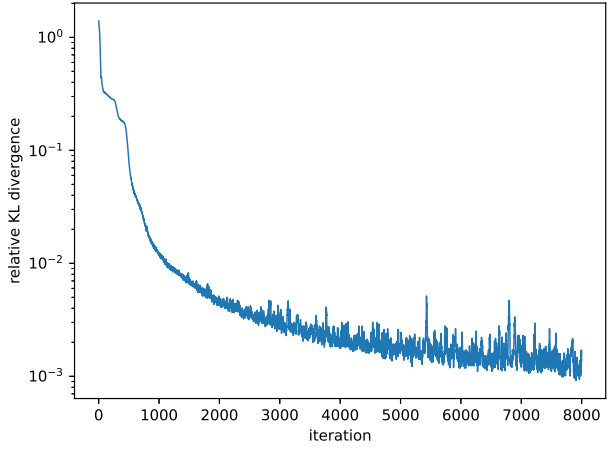}}
	\caption{The convergence behavior of KRnet\_aug\_R\&N for the approximation of the 2d mixture of Gaussians \eqref{eqn:mixture_of_G}.}\label{fig:2d_mix_G_approximation}
\end{figure}
\begin{figure}	
	\center{\includegraphics[width=0.99\textwidth]{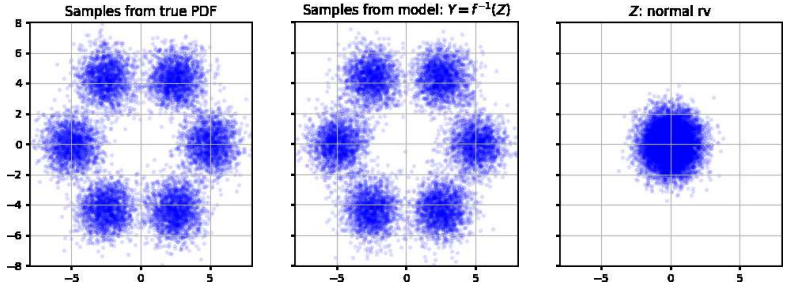}}
	\caption{Compare the data distributions from the 2d mixture of Gaussians \eqref{eqn:mixture_of_G} and the approximated PDF given by KRnet\_aug\_R\&N. The sample size is $N=10000$.}\label{fig:2d_mix_G_approx_samples}
\end{figure}

\begin{figure}	
	\center{\includegraphics[width=0.49\textwidth]{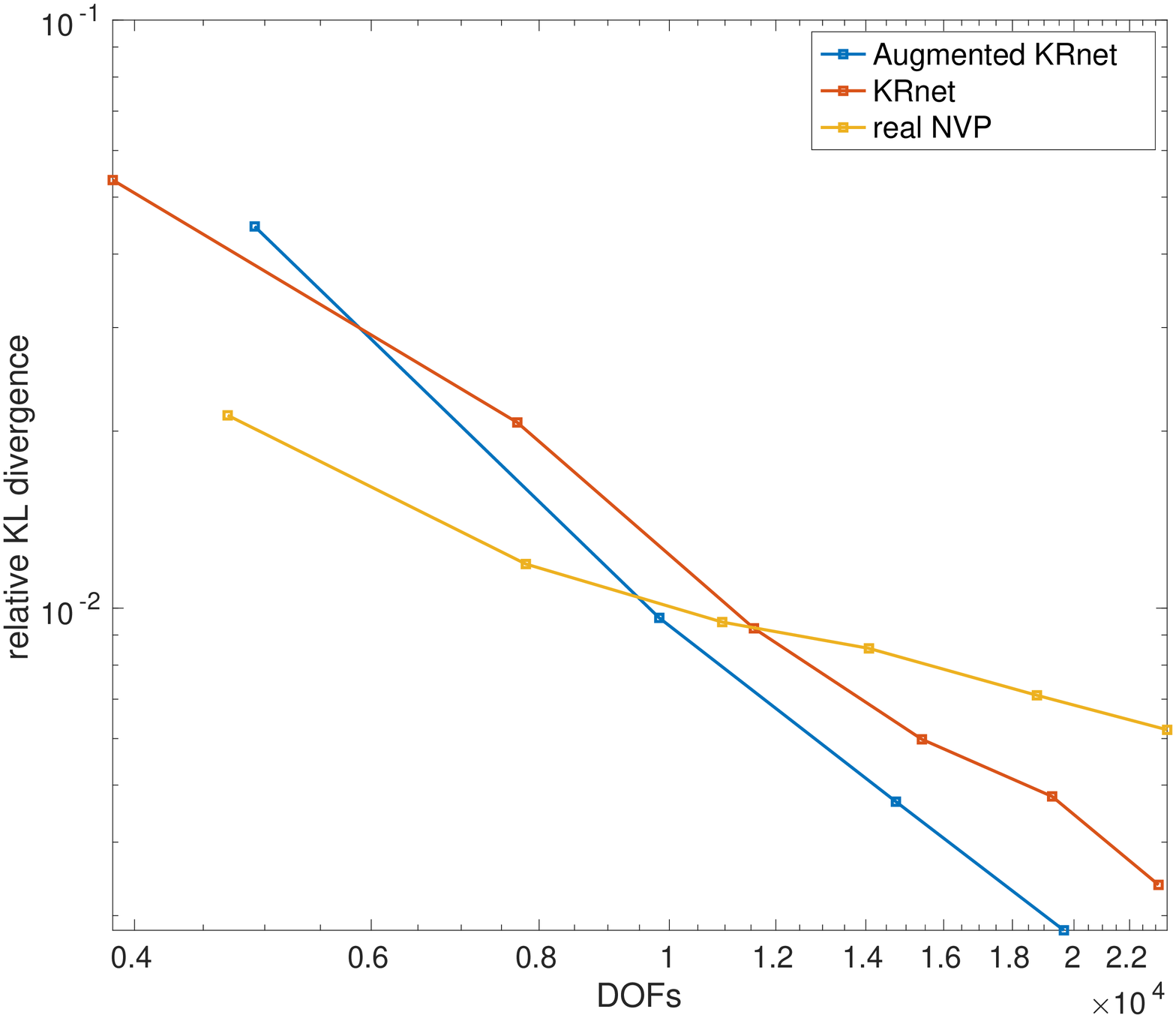}
		\includegraphics[width=0.49\textwidth]{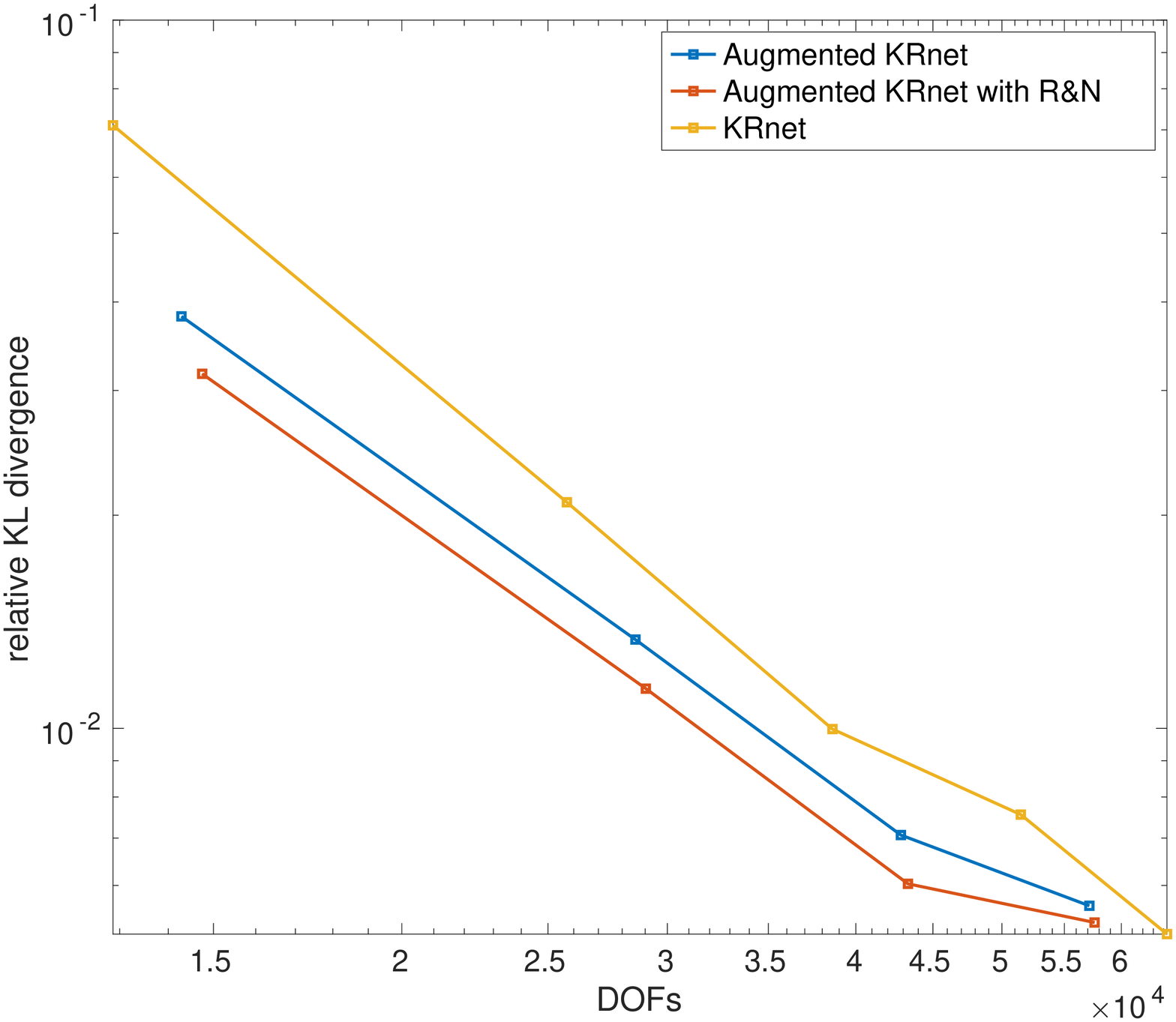}}
	\caption{Compare the convergence behavior of augmented KRnet, regular KRnet and real NVP. Left: $n=4$; Right: $n=8$.}\label{fig:D4_err_aKRnet}
\end{figure}
\begin{figure}	
	\center{\includegraphics[width=0.99\textwidth]{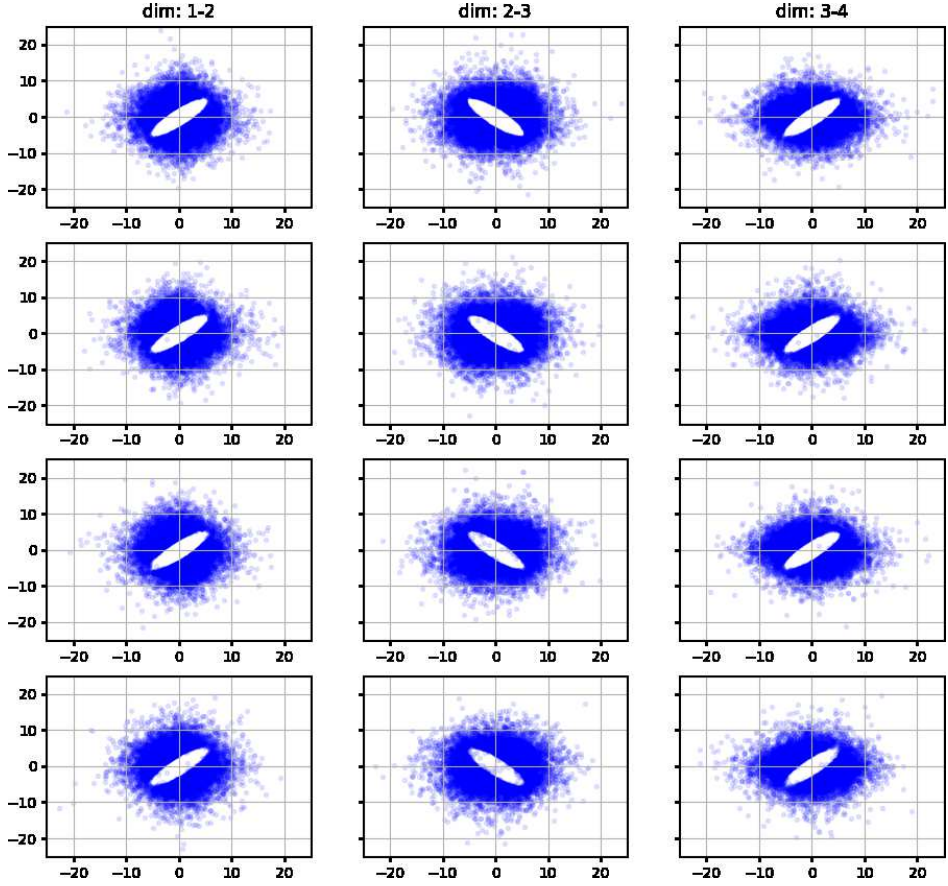}}
	\caption{Compare the samples from training set, the augmented KRnet, the regular KRnet and the real NVP for about the same number of DOFs. The three models correspond to the three cases in the left plot of figure \ref{fig:D4_err_aKRnet} with DOFs about 2e4. From top to bottom, each row shows three groups of adjacent dimensions ($(y_1,y_2)$, $(y_2,y_3)$, $(y_3,y_4)$) for the data from the training set, the augmented KRnet, the regular KRnet and the real NVP, respectively. Each set has 10000 samples.}\label{fig:4d_logistic_aKRnet_rNVP}
\end{figure}
\begin{figure}	
	\center{\includegraphics[width=0.99\textwidth]{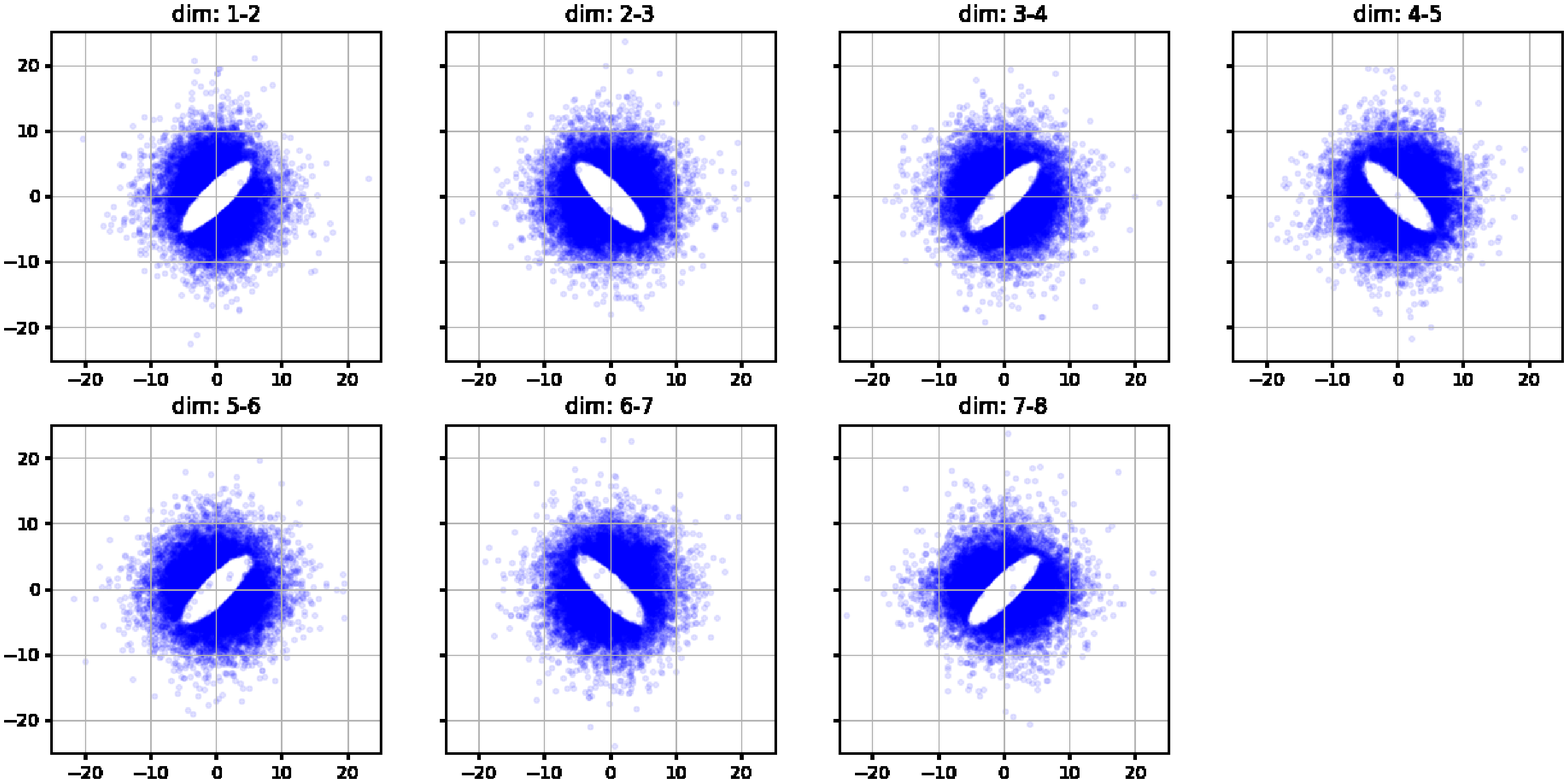}}
	\caption{Samples generated by augmented KRnet for the 8-dimensional Logistic distributions with elliptic holes. The sample size is $N=10000$. The model corresponds to the third case of KRnet\_aug\_R\&N in the right plot of figure \ref{fig:D4_err_aKRnet}, where the number of DOFs is about 4.3e4.}\label{fig:8d_logistic_samples_c1}
\end{figure}

We next consider the density approximation. We use KRnet\_aug\_R\&N to approximate the PDF \eqref{eqn:mixture_of_G} by minimizing the KL divergence \eqref{eqn:KL_for_approx}. For this case, there does not exist a training set. The samples for the approximation of the KL divergence are from the model KRnet\_aug\_R\&N. Since every minibatch can be independently sampled from the model, the optimization solver can be regarded as a minibatch stochastic gradient method with a training set of infinitely many data. For KRnet\_aug\_R\&N, we use $L=6$, and the rest of the configuration is the same as before. The size of minibatch is $10^5$. In figure \ref{fig:2d_mix_G_approximation}, we plot the convergence behavior of KRnet\_aug\_R\&N, and in figure \ref{fig:2d_mix_G_approx_samples}, we compare the samples from the true PDF and the approximated PDF. It is seen that the augmented KRnet is also effective for density approximation.

\subsection{Logistic distribution with holes}
The training data sets $\mathcal{S} = \{\by^{(i)}\}_{i=1}^{N_t}$ for density estimation are generated as follows. Assume that $\bY$ has i.i.d. components and each component $Y_i\sim\mathrm{Logistic}(0,s)$ with PDF $\rho(y_i;0,s)$. We propose the following constraint
\begin{equation}\label{eqn_logi_hole}
    \|{R}_{\gamma, \theta_j} [y_j^{(i)}, y_{j+1}^{(i)}]^\mathsf{T}\|_2 \geq C, \quad j = 1,\ldots,d-1,
\end{equation}
where $C$ is a specified constant, and 
\begin{equation*}
R_{\gamma, \theta_j} = \left[ \begin{array}{cc}
         \gamma & 0  \\
         0 & 1
    \end{array}
    \right] \left[ \begin{array}{cc}
         \mathrm{cos} \theta_j & -\mathrm{sin} \theta_j  \\
         \mathrm{sin} \theta_j & \mathrm{cos} \theta_j
    \end{array}
    \right], \quad \theta_j =  \frac{\pi}{4}, \ \text{if} \ j \ \text{is even}; 
    \frac{3\pi}{4}, \ \text{otherwise}.
\end{equation*}
We then generate samples $\by^{(i)}$ of $\bY$, out of which we only  accept those that satisfy the constraint \eqref{eqn_logi_hole}. This way, an elliptic hole is generated for two adjacent dimensions. The reference PDF takes the form
\begin{equation}
p_{\bY,\mathsf{ref}}(\by)=\frac{I_B(\by)\prod_{i=1}^n\rho(y_i;0,s)}{\mathbb{E}[I_{B}(\bY)]},
\end{equation}
where $B$ is the set defined by equation \eqref{eqn_logi_hole} and $I_B(\cdot)$ is an indicator function with $I_B(\by)=1$ if $\by\in B$; 0, otherwise. 

For this test problem, we set $\gamma =3$ and $C = 7.6$.  This case was studied in \cite{Wan_KRnet} and we use the same setup here. The size of the training set is $N_t=9.6\times10^5$ and the errors are computed in terms of a validation set of size $3.2\times10^5$. For each model configuration, we train the model 10 times respectively in terms of 10 independently sampled training sets. We then use the averaged error to reduce the bias. The neural network for the affine coupling layers has two dense hidden layers of 24 neurons for $n=4$ and of 32 neurons for $n=8$. The comparison of the augmented KRnet, the regular KRnet and the real NVP is summarized in figure \ref{fig:D4_err_aKRnet} in terms of DOFs, where the relative Kullback-Leibler (KL) divergence is defined as 
\[
\frac{D_{\mathsf{KL}}(p_{\bY,\mathsf{ref}}\|p_{\bY})}{h(p_{\bY,\mathsf{ref}})},
\]
where $D_{\mathsf{KL}}(p_{\bY,\mathsf{ref}}\|p_{\bY})$ is approximated by the validation set. It is seen that both the augmented KRnet and the regular KRnet yield a much better trend in terms of the convergence rate than the real NVP. The augmented KRnet and the regular KRnet have similar convergence behavior while the augmented KRnet is more effective than the regular KRnet for the same number of DOFs, which is verified by the simulation results for both $d=4$ and $d=8$. On the right plot, we also include the results for the augmented KRnet with rotation and nonlinear layers. With a slightly larger number of DOFs, the rotation and nonlinear layers further improve the performance of the augmented KRnet. It is seen that rotation and nonlinear layers do not improve the augmented KRnet for the last case. It is because a constant error has been reached since both the loss and the generalization error have been approximated by the Monte Carlo method.

In figure \ref{fig:4d_logistic_aKRnet_rNVP}, we have compared the samples generated by some generative models to the training set. The data distribution to be learned is highly irregular. On any face given by two adjacent dimensions, a cylinder hole exists. On the boundary of this cylinder, the density can be large. The existence of sharp discontinuities in density implies that classical PDF models such as the mixture of Gaussians are not effective. However, the deep generative models can deal with this high-dimensional density estimation problem quite well. Roughly speaking, we may tell the improvement from the real NVP to the augmented KRnet by the decreasing number of outliers in the hole, where the density is supposed to be zero. Out of 10000 samples only a few show up in the holes meaning that boundaries of the holes have been well captured. 

In figure \ref{fig:8d_logistic_samples_c1} we plot the distribution of samples generated by the augmented KRnet for the 8-dimensional Logistic distribution with elliptic holes. It is seen that for such a high-dimensional irregular distribution the sharp discontinuities in density can also be well resolved.

\section{Summary and discussions}
In this work we have developed augmented KRnet for both discrete and continuous models. The main idea is to introduce augmented dimensions to enhance the exchange of information between data dimensions such that the flow-based generative model induced by KRnet may further increase its modeling capability while maintaining the exact invertibility of the transport map. We have also formulated the augmented KRnet as a discretization of a neural ODE by a one-step method of first-order accuracy, where the exact invertibility has been kept locally. Although we are not able to discretize the neural ODE with a high-order numerical scheme, a dynamical model with a first-order invertible discretization is still of particular interest for the modeling of dynamical data since the gradient can be exactly computed. A number of numerical experiments have been implemented. Both discrete and continuous models based on the augmented KRnet are effective for both density estimation and approximation, where the algebraic convergence is observed as the number of DOFs increases. In particular, the augmented KRnets are able to deal with high-dimensional distributions that have sharp discontinuous boundaries. Based on these observations, we think that the augmented KRnet may serve as a generic PDF model for many applications. At this moment, our numerical experiments show that the discrete models are in general more effective and much faster than the continuous models. Further research is needed to improve the efficiency of  KRnet\_ODE.

\section*{Acknowledgment}
This work was supported by NSF grant DMS-1913163.

\end{document}